\documentclass{article}

\usepackage{microtype}
\usepackage{graphicx}
\usepackage{booktabs} 
\usepackage{hyperref}


\usepackage[accepted]{icml2024}

\usepackage{amsmath}
\usepackage{amssymb}
\usepackage{mathtools}
\usepackage{amsthm}

\usepackage[capitalize,noabbrev]{cleveref}

\usepackage{enumitem}
\usepackage{bm}
\usepackage{subcaption}

\graphicspath{{../}}

\theoremstyle{plain}
\newtheorem{theorem}{Theorem}[section]

\newtheorem{lemma}[theorem]{Lemma}

\theoremstyle{definition}
\newtheorem{definition}[theorem]{Definition}
\newtheorem{assumption}[theorem]{Assumption}
\theoremstyle{remark}
\newtheorem{remark}[theorem]{Remark}

\usepackage[textsize=tiny]{todonotes}

\DeclareMathOperator{\Xb}{\textbf{X}}
\DeclareMathOperator{\xb}{\textbf{x}}
\DeclareMathOperator{\yb}{\textbf{y}}
\DeclareMathOperator{\thetab}{\bm\theta}
\DeclareMathOperator{\Thetab}{\bm\Theta}
\DeclareMathOperator{\Loss}{\mathcal L}
\DeclareMathOperator{\B}{\mathcal B}

\newcommand{\lrVert}[1]{\left\Vert #1 \right\Vert}

\icmltitlerunning{PBM-VFL: Vertical Federated Learning with Feature and Sample Privacy}

\begin{document}

\twocolumn[
\icmltitle{PBM-VFL: Vertical Federated Learning with Feature and Sample Privacy}




\begin{icmlauthorlist}
\icmlauthor{Linh Tran}{rpi}
\icmlauthor{Timothy Castiglia}{rpi}
\icmlauthor{Stacy Patterson}{rpi}
\icmlauthor{Ana Milanova}{rpi}
\end{icmlauthorlist}

\icmlaffiliation{rpi}{Rensselaer Polytechnic Institute}

\icmlcorrespondingauthor{Linh Tran}{tranl3@rpi.edu}

\icmlkeywords{Machine Learning, ICML}

\vskip 0.3in
]



\printAffiliationsAndNotice{} 

\begin{abstract}
We present Poisson Binomial Mechanism Vertical Federated Learning (PBM-VFL), a communication-efficient Vertical Federated Learning algorithm with Differential Privacy guarantees. PBM-VFL combines Secure Multi-Party Computation with the recently introduced Poisson Binomial Mechanism to protect parties' private datasets during model training. We define the novel concept of feature privacy and analyze end-to-end feature and sample privacy of our algorithm. We compare sample privacy loss in VFL with privacy loss in HFL. We also provide the first theoretical characterization of the relationship between privacy budget, convergence error, and communication cost in differentially-private VFL. Finally, we empirically show that our model performs well with high levels of privacy.
\end{abstract}

\section{Introduction}\label{intro.sec}

Federated Learning (FL) \cite{pmlr-v54-mcmahan17a} is a machine learning technique where data is distributed across multiple parties,
and the goal is to train a global model collaboratively.
The parties execute the training algorithm, facilitated by a central server, 
without directly sharing the private data with each other or the server. 
FL has been used in various applications such as drug discovery, mobile keyboard prediction, and ranking browser history suggestion \cite{9153560}.

The majority of FL algorithms support Horizontal Federated Learning (HFL) (e.g., \cite{DBLP:journals/tist/YangLCT19}), where the datasets of the parties are distributed horizontally, i.e, all parties share the same features, but each has a different set of data samples. In contrast, Vertical Federated Learning (VFL) targets the case where all parties share the same set of data sample IDs, but each has a different set of features (e.g., ~\cite{FDML, gu2021privacy}). An example of VFL includes a bank, a hospital, and an insurance company who wish to train a model predicting customer credit scores. The three institutions have a common set of customers, but the bank has information about customers' transactions, the hospital has medical records, and the insurance company has customers' accident reports. Such a scenario must employ VFL to train models on private vertically distributed data.

While FL is designed to address privacy concerns by keeping data decentralized, there is possible information leakage from the messages exchanged during training \cite{DBLP:conf/nips/GeipingBD020}, \cite{MahendranV15}. So, it is crucial to develop methods for FL that have provable privacy guarantees. A common approach for privacy-preservation is \emph{Differential Privacy} (DP)~\cite{10.1561/0400000042}, in which the private data is protected by adding noise at various stages in the training algorithm. Through careful application of DP, one can protect the data, not only in a single computation, but throughout the execution of the training algorithm.
A number of works have developed and analyzed the end-to-end privacy of DP-based  HFL algorithms (e.g., ~\cite{10.1145/3338501.3357370, 9069945, 10.1145/3378679.3394533}). However, there is limited prior work on privacy analysis in VFL, and none that addresses the interplay between the convergence of the training algorithm, the end-to-end privacy, and the communication cost.

We propose Poisson Binomial Mechanism VFL (PBM-VFL), a new VFL algorithm that combines the Poisson Binomial Mechanism (PBM) \cite{chen2022poisson} with Secure Multi-Party Computation (MPC) to provide DP over the training datasets. In PBM-VFL, each party trains a local network that transforms their raw features into embeddings. The server trains a fusion model that produces the predicted label from these embeddings. To protect data privacy, the parties quantize their embeddings into differentially private integer vectors using PBM. The parties  apply MPC over the integer values so that the server aggregates the quantized sum without learning anything else. The server then estimates the embedding sum to calculate the loss and the gradients needed for training. 

To study the privacy of PBM-VFL, we introduce the novel notion of \emph{feature privacy} which aims to protect the full (column-based) feature data held by a party, across all samples. This notion arises naturally in VFL, but requires a new definition of adjacent data sets for DP. We consider the standard \emph{sample privacy} as well, specifically privacy loss per sample arising from the nature of computation in VFL.
We note that VFL presents a different privacy problem than HFL. In HFL, parties share an aggregate gradient over a minibatch  with the server, whereas in VFL, the server learns the sum of the party embeddings \emph{for each sample} in a minibatch individually. This difference requires a different privacy analysis. Further, the DP noise enters via the gradient computation in HFL, whereas it enters via an argument to the loss function in VFL. This change necessitates different convergence analysis. 

We analyze the end-to-end feature and sample privacy of PBM-VFL as well as its  convergence behavior. We also relate this analysis to the communication cost. Through these analyses, we provide the first theoretical characterization of the tradeoffs between privacy, convergence error, and communication cost in VFL.

We summarize our main contributions in this work.
\begin{enumerate}[nosep,topsep=0pt]
    \item We introduce PBM-VFL, a communication efficient and private VFL algorithm.
    \item We provide analysis of the overall privacy budget for both feature privacy and sample privacy.
    \item We prove the convergence bounds of PBM-VFL and give the relationship between the privacy budget and communication cost.  
    \item We evaluate our algorithm by training on ModelNet-10 and Cifar-10 datasets. Our results show that the VFL model performs well with high accuracy as we increase the privacy parameters.
\end{enumerate}

\textbf{Related work.}
The problem of privacy preserving in HFL with DP has received much attention with many significant publications, including \cite{9069945, 10.1145/3378679.3394533,agarwal2018cpsgd}. More recent works propose notable differentially private compression mechanisms for HFL. \cite{chen2022poisson} utilized quantization and DP to protect the gradient computation in an unbiased manner. \cite{guo2023privacyaware} proposed interpolated Minimum Variance Unbiased quantization scheme with R\'{e}nyi DP to protect client data. \cite{youn2023randomized} developed Randomized Quantization Mechanism to achieve R\'{e}nyi DP and prevent data leakage during local updates.
All these works focus on quantization methods with DP in HFL. While PBM-VFL utilizes the same mechanisms, there are significant differences for application and analysis in VFL vs HFL.

There are previous papers that provide different private methods for VFL. \cite{10.5555/3618408.3619245, 9343209} propose MPC protocols for VFL, but they do not use DP, and thus while aggregation computations are private, information may still be leaked from the resulting aggregate information.
\cite{10.14778/3583140.3583146, ijcai2022p272} use DP mechanisms for $k$-means and graph neural networks, but their algorithms do not apply to traditional neural networks. \cite{xu2021fedv} proposes a secure and communication-efficient framework for VFL using Functional Encryption, but they do not consider privacy for end-to-end training.

\textbf{Outline.} The rest of the paper is organized as follows. Sec. 2 presents the building blocks for PBM-VFL. Sec. 3 details the system model and training problem. Sec. 4 lists our privacy goals. Sec. 5 presents our algorithm, and Sec. 6 presents the analysis. Sec. 7 summarizes our experimental results, and Sec. 8 concludes.

\section{Background}\label{bg.sec}

This section presents background on Differential Privacy and related building blocks used in our algorithm.

\subsection{Differential Privacy} 

Differential Privacy (DP) provides a strong privacy guarantee that ensures that an individual's sensitive information, e.g., the training data, remains private even if the adversary has access to auxiliary information.
In this work, we employ R\'{e}nyi Differential Privacy (RDP) \cite{Mironov_2017}, which is based on the concept of the R\'{e}nyi divergence. We utilize RDP rather than standard $(\epsilon, \delta)$-DP because it facilitates the calculation of the cumulative privacy loss over the sequence of algorithm training iterations. The R\'{e}nyi divergence and RDP are defined as follows.
\begin{definition}
For two probability distributions $\mathcal{P}$ and $\mathcal{Q}$ defined over a set $\mathcal{R}$, the \emph{R\'{e}nyi divergence of order~$\alpha > 1$} is
\begin{align*}
    D_\alpha (\mathcal{P},\mathcal{Q}) \coloneqq \frac{1}{\alpha - 1} \text{log} \left(\mathbb{E}_{x\sim \mathcal{Q}} \Bigg( \frac{\mathcal{P}(x)}{\mathcal{Q}(x)} \Bigg)^\alpha \right).
\end{align*}
\end{definition}
\begin{definition}
A randomized mechanism ${\mathcal{M}: \mathcal{D} \rightarrow \mathcal{R}}$ with domain $\mathcal{D}$ and range $\mathcal{R}$ satisfies \emph{$(\alpha, \epsilon)$-RDP} if for any two adjacent inputs $d, d' \in \mathcal{D}$, it holds that
\begin{align*}
    D_\alpha (\mathcal{P}_{\mathcal{M}(d)}, \mathcal{P}_{\mathcal{M}(d')}) \leq \epsilon .
\end{align*}
\end{definition}

\subsection{Poisson Binomial Mechanism}
A key component of our algorithm is to protect the inputs of distributed sum computations. To do this, we rely on the combination of RDP and MPC and use the Poisson Binomial Mechanism (PBM) developed in~\cite{chen2022poisson}. Unlike other private methods, PBM provides RDP guarantee, and uses the Binomial distribution to generate scalar quantized values which is suitable for integer-based MPC.

\begin{algorithm}[t]
    \caption{Scalar Poisson Binomial Mechanism}
    \label{pbm.alg}
    \begin{algorithmic}[1]
    \STATE {\textbf{Input:}} $x_i \in [-C, C], \beta \in [0, \frac{1}{4}], b\in \mathbb{N}$
    \STATE $p_i \leftarrow \frac{1}{2} + \frac{\beta}{C} x_i$
    \STATE $q_i \leftarrow \text{Binom} (b, p_i)$
    \STATE {\textbf{Output:}} Quantized value $q_i$
\end{algorithmic}
\end{algorithm}

We sketch the process for computing a sum of scalar values. 
Suppose each participant $m=1, \ldots, M$ has an input ${x_m \in [-C, C]}$, and the goal is to estimate the sum ${s = \sum_{m=1}^M x_m}$ while protecting the privacy of the inputs. Each participant first quantizes its value according to Algorithm~\ref{pbm.alg}. The values of $\beta\in [0, \frac{1}{4}]$ and $b\in \mathbb{N}$ are chosen to achieve a desired RDP and accuracy tradeoff.
The participants use MPC to find the sum $\hat{q} = \sum_{m=1}^M q_m$.
An estimated value of $s$ is then computed from $\hat{q}$ as ${\tilde{s} =  \frac{C}{\beta b} (\hat{q} - \frac{bM}{2})}$.
We provide the following theorem, which is a slight modification from the result in ~\cite{chen2022poisson} adapted for computing a sum rather than an average. 
\begin{theorem}[\cite{chen2022poisson}]
    \label{pbm.thm}
    Let $x_m \in [-C, C]$, $m=1, \ldots, M$,  $\beta \in [0, \frac{1}{4}]$, and $b\in \mathbb{N}$. Then, the sum computation:
    \begin{enumerate}[nolistsep]
        \item satisfies $(\alpha, \epsilon(\alpha))$-RDP for $\alpha > 1$ and ${\epsilon(\alpha) = \Omega (b\beta^2 \alpha / M)}$
        \item yields an unbiased estimate of $s$ with variance~$\displaystyle \frac{C^2 M}{4 \beta^2 b}$.
    \end{enumerate}
\end{theorem}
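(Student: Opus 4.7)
The plan is to treat the two parts of the theorem separately and to reduce each to facts that are either immediate from the definition of the mechanism or inherited from the analysis in Chen et al.\ (2022) via elementary post-processing and rescaling.

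For the RDP claim, I would first observe that the only randomness in the pipeline lives in the quantized values $q_m = \mathrm{Binom}(b, p_m)$ with $p_m = \tfrac{1}{2} + \tfrac{\beta}{C}x_m$, and the only quantity the server actually obtains from the secure aggregation is $\hat q = \sum_m q_m$, which is a Poisson-Binomial random variable with success parameters $\{p_m\}$. The estimator $\tilde s = \tfrac{C}{\beta b}(\hat q - \tfrac{bM}{2})$ is a deterministic function of $\hat q$, so by post-processing it enjoys the same RDP guarantee as $\hat q$. Chen et al.\ establish $(\alpha, \Omega(b\beta^2 \alpha / M))$-RDP for the Poisson-Binomial aggregate itself (their averaging estimator is also just a deterministic rescaling of $\hat q$), so the same bound transfers verbatim. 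The only point that needs a line of justification is that the notion of adjacency used in the sum setting—changing a single $x_m$—coincides with the one used in Chen et al., which it does since that is precisely the input to the $m$-th scalar mechanism.

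For unbiasedness and variance I would argue directly. Using $\mathbb{E}[q_m] = bp_m = \tfrac{b}{2} + \tfrac{b\beta}{C}x_m$ and linearity of expectation,
\begin{align*}
\mathbb{E}[\hat q] \;=\; \tfrac{bM}{2} + \tfrac{b\beta}{C} s,
\end{align*}
so $\mathbb{E}[\tilde s] = \tfrac{C}{\beta b}(\mathbb{E}[\hat q] - \tfrac{bM}{2}) = s$. For the variance, the $q_m$ are independent (each party draws its own Binomial), hence
\begin{align*}
\mathrm{Var}(\hat q) \;=\; \sum_{m=1}^M b\, p_m(1-p_m) \;\le\; \tfrac{bM}{4},
\end{align*}
using the standard bound $p(1-p) \le \tfrac14$. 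Scaling by $(C/(\beta b))^2$ yields $\mathrm{Var}(\tilde s) \le C^2 M/(4\beta^2 b)$, matching the stated expression (read as the worst-case bound, attained when every $x_m = 0$).

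I do not expect any serious obstacle here: the second part is routine Binomial moment arithmetic, and the first part is essentially a bookkeeping step on top of the cited RDP analysis. The one place to be careful is making the translation from the ``average'' formulation in Chen et al.\ to the ``sum'' formulation explicit, so that readers see why the factor of $M$ in the denominator of $\epsilon(\alpha)$ survives unchanged, and so that the adjacency notion used in the underlying RDP statement is clearly the per-party one that we need.
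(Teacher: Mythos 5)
Your proposal is correct and matches what the paper does: the paper offers no proof of this theorem at all---it imports it from Chen et al.\ (2022) with only the remark that it is ``a slight modification \ldots adapted for computing a sum rather than an average''---and your argument is exactly the right way to carry out that adaptation, namely post-processing (the sum and the average are deterministic rescalings of the same Poisson-Binomial aggregate $\hat q$, and per-party adjacency is the adjacency notion in the cited RDP analysis) for part~1, and routine Binomial moment calculations for part~2. Your side remark that the stated variance is really the worst-case bound, $\mathrm{Var}(\tilde s) = \tfrac{C^2}{\beta^2 b^2}\sum_{m} b\,p_m(1-p_m) \le \tfrac{C^2 M}{4\beta^2 b}$ with equality only when every $x_m = 0$, is accurate and is the one place where the theorem's wording is slightly loose.
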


\subsection{Multi-Party Computation} \label{secagg.sec}
While the PBM can be used to provide RDP for a sum, we must ensure that the inputs to the sum are not leaked during the computation. This is achieved via MPC, a cryptographic mechanism that allows a set of parties to compute a function over their secret inputs, so that only the function output is revealed.

There are a variety of MPC methods that can be used for sum computation. For the purposes of our communication cost analysis, we fix an MPC protocol, specifically Protocol 0 for Secure Aggregation from~\cite{bonawitz2016practical}. It considers $M$ parties, each holding an integer secret value $q_m \in [0,b)$ and an honest-but-curious server. The goal is to compute $\sum q_m$ collaboratively so that the server learns the sum and nothing else, while the parties learn nothing.

In Protocol 0, each pair of parties $m_1,m_2$ samples two random integers in $[0,b)$, $u_{m_1,m_2}$ and $u_{m_2,m_1}$ using  pseudo-random number generators with a seed known to only parties $m_1$ and $m_2$. Thus, the pseudo-random number generators generate the same integers at party $m_1$ and at party $m_2$ and no exchange is required. Each party then computes $M-1$ perturbations $p_{m,m'} = u_{m,m'}-u_{m',m}$ and masks its secret value by computing $y_m = q_m + \sum^M_{m'=1} p_{m,m'}$ ($p_{m,m} = 0$). It then sends $y_m$ to the server. The server sums all $y_m$: $S = \sum^M_{m=1} y_m + \sum^M_{m=1} \sum^M_{m'=1} p_{m,m'}$, and this is exactly $\sum^M_{m=1} q_m$. At the same time, the protocol is perfectly secure, revealing nothing about the individual $q_m$'s to the server. 

To analyze communication cost, observe that each party incurs cost by sending $y_m$. Since ${-(M-1)b < y_m < Mb}$, $O(\log (bM))$ bits suffice to represent the range of negative and positive values of $y_m$. Thus, we need $O(\log (bM))$ per-party bits to send the masked value.

\section{Training Problem}\label{model.sec}

We consider a system consisting of $M$ parties and a server.
There is a dataset $\Xb \in \mathbb{R}^{N \times D}$ partitioned 
across the $M$ parties, where $N$ is the number of data samples
and $D$ is the number of features. 
Let $\xb^i$ denote the $i$th sample of $\Xb$. 
For each sample $\xb^i$, each party $m$ holds a disjoint subset, i.e., a vertical partition, of the features.
We denote this subset by $\xb_m^i$, and note that $\xb^i = [\xb_1^i, \ldots, \xb_M^i]$.
The entire vertical partition of $\Xb$ that is held  by party $m$ is denoted by $\Xb_m$, with $\Xb = [\Xb_1, \ldots, \Xb_M]$.

Let $y_i$ be the label for sample $\xb_i$, and let $\yb \in \mathbb{R}^{N \times 1}$ denote the set of all  labels. We assume that the labels are stored at the server.
As it is standard, the dataset is aligned for all parties in a privacy-preserving manner as a pre-processing step. This can be done using Private Entity Resolution~\cite{xu2021fedv}.

\begin{figure}
\centering
\includegraphics[scale=.05]{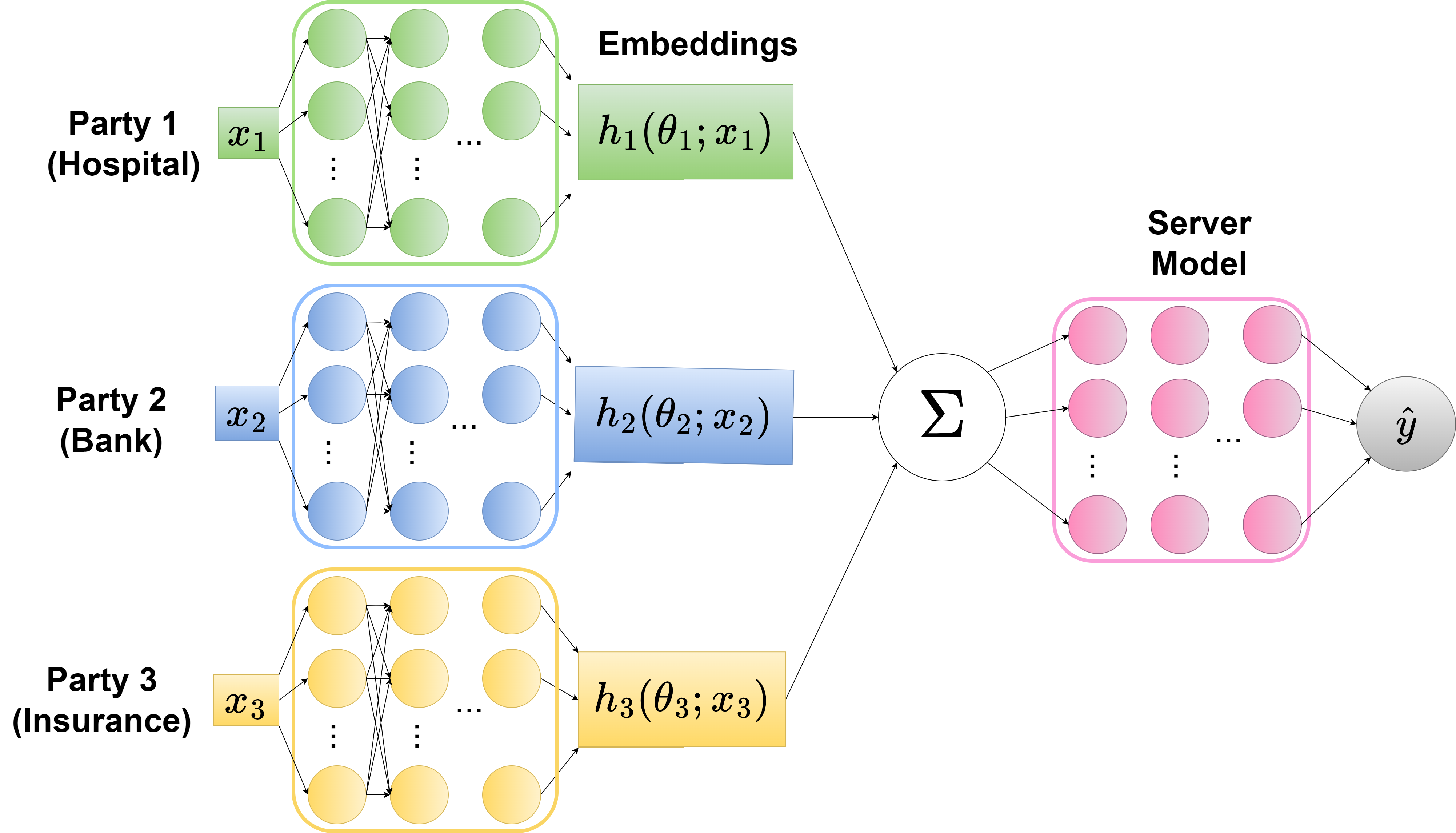}
\caption{Example global model with neural networks.}
\label{model.fig}
\end{figure}

The goal is to train a global model using the data from all parties and the 
labels from the server. Each party $m$ trains a local network $h_m$ with 
parameters $\thetab_m$ that takes the vertical partition of a sample 
$\xb$ as 
input and produces an embedding of dimension $P$. The server trains a fusion 
model $h_0$ with parameters $\thetab_0$ that takes a sum of the embeddings 
for a sample as input and produces a predicted label $\hat{y}$. 
The global model $f(\cdot)$ has the form
\begin{align}
f(\xb; \Thetab) := h_0\left(\sum_{m=1}^M h_m(\thetab_m; \xb_m);~\thetab_0\right)
\end{align}
where $\Thetab$ denotes the set of all model parameters. An example of the global model architecture is shown in Figure~\ref{model.fig}. To train this model, the parties and the server collaborate to minimize a loss function: 
\begin{align}
\Loss(\Thetab; \Xb, \yb) := \frac{1}{N}\sum_{i=1}^N \ell\left(\thetab_0, \hat{h}(\thetab_1, \ldots, \thetab_M; \xb^i); y^i \right)
\end{align}
where $\ell(\cdot)$ is the loss for a single sample $(\xb^i, y^i)$, 
and 
\begin{align}
\hat{h}(\thetab_1, \ldots, \thetab_M; \xb^i) = \sum_{m=1}^M h_m(\thetab_m;\xb_m^i).
\end{align}
Let $\B \coloneqq (\Xb^{\B}, \yb^{\B})$ be a randomly sampled mini-batch of $B$ samples.
We denote the partial derivative of $\Loss$ over $\B$ with respect to $\thetab_m$, $m=0, \ldots, M$,  by
\begin{align*}
&\nabla_m \Loss_{\B}(\Thetab) \coloneqq \nonumber \\
&~~~~~~\frac{1}{B} \sum_{(\xb^i,y^i) \in \B} 
\nabla_{\thetab_m} \ell(\thetab_0, \hat{h}(\thetab_1, \ldots, \thetab_M; \xb^i); y^i). 
\end{align*}

The partial derivatives of $\Loss$ and $\Loss_{\mathcal{B}}$ with respect to $\hat{h}$ are denoted by $\nabla_{\hat{h}} \Loss$ and $\nabla_{\hat{h}} \Loss_{\mathcal{B}}$, respectively.

We make the following assumptions. 
\begin{assumption}(Smoothness). \label{smooth.assum} 
    \begin{enumerate}[noitemsep]
    \item There exists positive constant ${L < \infty}$ 
    such that for all $\Thetab_1$, $\Thetab_2$: 
        \begin{align}
            \lrVert{\nabla \Loss(\Thetab_1) - \nabla \Loss(\Thetab_2)} &\leq L \lrVert{\Thetab_1 - \Thetab_2} \label{smooth1.eq}
        \end{align}
\item There exist positive constants $L_0 < \infty$ and ${L_{\hat{h}} < \infty}$ such that for all server parameters $\thetab_0$ and $\thetab_0'$ and 
all embedding sums $\bm{h}$ and $\bm{h}'$:
\begin{align}
& \| \nabla_{0} \ell(\thetab_0,\bm{h}) - \nabla_{0}\ell(\thetab_0',\bm{h}') \| \leq \nonumber \\
&~~~~~~~~~~~~~~~~~~~~~~~~~~~~ L_0 \|[ \thetab_0^T, \bm{h}^T]^T - [ \thetab_0'^T, \bm{h}'^T]^T \| \label{smooth2.eq} \\
& \| \nabla_{\hat{h}} \ell(\thetab_0,\bm{h}) - \nabla_{\hat{h}}\ell(\thetab_0',\bm{h}')  \| \leq \nonumber \\
&~~~~~~~~~~~~~~~~~~~~~~~~~~~  L_{\hat{h}} \| [ \thetab_0^T, \bm{h}^T]^T - [ \thetab_0'^T, \bm{h}'^T]^T \|. \label{smooth3.eq}
\end{align}
\end{enumerate}
\end{assumption}
\begin{assumption}[Unbiased gradients]
    \label{bias.assum}
    For every mini-batch $\B$, 
        the stochastic gradient is unbiased:
        \begin{align*}
            \mathbb{E}_{\B}{\nabla \Loss_{\B}(\Thetab)} = \nabla \Loss(\Thetab).
        \end{align*}
\end{assumption}
\begin{assumption}[Bounded variance]
    \label{var.assum} 
    There exists positive constant ${\sigma < \infty}$
        such that for every mini-batch $\B$ (with $| \B | = B$)
        \begin{align}
            \mathbb{E}_{\B}{\lrVert{\nabla \Loss(\Thetab) -  \nabla \Loss_{\B}(\Thetab)}^2} \leq \frac{\sigma^2}{B}.
        \end{align}
\end{assumption}
\begin{assumption}[Bounded embeddings] \label{embedbound.assum} 
There exists positive constant $C < \infty$ such that for $m=1, \ldots, M$, for all $\thetab_m$ and $\xb_m$, $\| h_m(\thetab_m; \xb_m) \|_{\infty} \leq C$.
\end{assumption}
\begin{assumption}[Bounded embedding gradients] \label{embedgrad.assum}
There exists positive constants $H_m < \infty$ for 
$m=1, \ldots, M$ such that for all $\thetab_m$ and all samples $i$, the embedding gradients are bounded as 
\begin{align}
\| \nabla_m h(\thetab_m; \xb_m^{i}) \|_{\mathcal{F}} \leq H_m
\end{align}
where $\| \cdot \|_{\mathcal{F}}$ denotes the Frobenius norm.
\end{assumption}

Part 1 of Assumption~\ref{smooth.assum},  Assumption~\ref{bias.assum}, and Assumption~\ref{var.assum} are standard in the analysis of gradient-based algorithms (e.g., \cite{HogWild18, bottou2018optimization}).
Part 2 of Assumption~\ref{smooth.assum} bounds the rate of change of the partial derivative of $\ell$ with respect to each of its two arguments. This assumption is needed to ensure convergence over the noisy embedding sums. We note that this assumption does not place any additional restrictions on the server model architecture over Assumption~\ref{smooth.assum} Part 1.
Assumption~\ref{embedbound.assum} bounds the individual components of the embeddings. This can be achieved via a standard activation function such as 
sigmoid or $tanh$. Assumption~\ref{embedgrad.assum} bounds the partial derivatives of the embeddings with respect to a single sample. This bound is also necessary to analyze the impact of the DP noise on the algorithm convergence. 



\section{Privacy Goals}

Our method makes use of RDP that aims to provide a measure of indistinguishability between adjacent datasets consisting of multiple data samples. We consider two notions of privacy, the novel \emph{feature privacy} and the standard \emph{sample privacy}. 

Feature privacy is a natural goal in VFL: parties share the same set of sample IDs but each has different feature set, and each party aims to protect its feature set. A natural question is, how much one can learn about a party's data (made up of columns of $\Xb$) from the aggregates of embeddings shared during training?
To this end, we aim to provide feature privacy for each party's feature set by providing indistinguishability of whether a feature was used in training or not. 
For feature privacy, we say that two datasets are \emph{feature set adjacent} if they differ by a single party's feature set. Given an algorithm $\mathcal{M}$ that satisfies RDP, a dataset $d$ and its feature set adjacent dataset $d'$, then RDP ensures that it is impossible to tell, up to a certain probabilistic guarantee, if $d$ or $d'$ is used to compute $\mathcal{M}(d)$. 

This notion of privacy is different from the one in HFL where a party aims to protect a sample, meaning that one cannot tell if a particular sample is present in the training data. 
The standard definition of adjacent datasets applies here: two sets are \emph{sample adjacent} if they differ in a single sample. Standard \emph{sample privacy} is still a concern in VFL and we aim to protect sample privacy as well.

We assume that all parties and the server are honest-but-curious. They correctly follow the training algorithm, but they can try to infer the data of other parties from information exchanged in the algorithm. We assume that the parties do not collude and that communication occurs through robust and secure channels.

\section{Algorithm} \label{algorithm.sec}

\begin{algorithm}[t]
    \caption{PBM-VFL}
    \label{vflmpc.alg}
    \begin{algorithmic}[1]
    \STATE {\textbf{Initialize:}} $\Thetab^0 = [\thetab_0^0,\thetab_1^0, \ldots, \thetab_M^0]$ 
  \FOR {$t \leftarrow 0, \ldots, T-1$}
        \STATE Randomly sample $\B^t$ from $(\Xb, \yb)$
        \FOR {party $m \leftarrow 1, \ldots, M$ in parallel}
        \STATE /* Generate quantized embeddings for $\B^t$ */
            \STATE  $q_m^t \leftarrow \textbf{PBM}(h_m(\thetab_m^t ; \Xb^{\B^t}_m), b, \beta)$ 
        \ENDFOR
        \STATE /* At server */
        \STATE $\hat{q}^t \gets \sum_{m=1}^M q_m^t$ via \textbf{MPC}
        \STATE  $\tilde{h}^t \leftarrow \frac{1}{\beta b} (\hat{q}^t - \frac{bM}{2})$
        \STATE Server sends $\nabla_{\hat{h}} \Loss_{\B}(\thetab_0^t, \tilde{h}^t)$ to all parties
         \STATE /* server updates its parameters */
        \STATE $\thetab_0^{t+1} \gets \thetab_0^t - \eta \nabla_0 \Loss_{\B}(\thetab_0^t, \tilde{h}^{t})$
        \FOR {$m \leftarrow 1, \ldots, M$ in parallel}
            \STATE /* party $m$ updates its parameters */
            \STATE $\nabla_m \Loss_{\B}(\Thetab^t) \gets$
            \STATE $~~~~~~\nabla_m h_m(\thetab_m^t; \Xb_m^t) \nabla_{h_m} \tilde{h}^{t}  \nabla_{\hat{h}} \Loss_{\B^t}(\thetab_0^t, \tilde{h}^t)$
            \STATE $\thetab_m^{t+1} \gets \thetab_m^t - \eta^t \nabla_m \Loss_{\B}(\Thetab^t)$
        \ENDFOR
\ENDFOR
\end{algorithmic}
\end{algorithm}

We now present PBM-VFL. Pseudocode is given in Algorithm~\ref{vflmpc.alg}. 

Each party $m$ and the server initialize their local parameters~$\thetab_m$, $m=0, \ldots, M$ (line 1).
The algorithm runs for $T$ iterations.
In each iteration $t$, the server and parties agree on a minibatch $\B^t$, chosen at random from $\Xb$. This can be achieved, for example, by having the server and all parties use pseudo-random number generators initialized with the same seed.
Each party $m$ generates an embedding $h_m(\thetab_m^t ; \xb^i)$ for each sample $i$ in the minibatch. We denote the set of party $m$'s embeddings for the minibatch by $h_m(\thetab_m^t ; \Xb_m^{\B^t})$.  
Each party $m$ computes the set of noisy quantized embeddings $q_m^t$ using PBM component-wise on each embedding (lines 3-7).

To complete forward propagation, the server needs an estimate of the embeddings sum for each sample in $\B^t$. The parties and the server execute MPC (Protocol 0), which reveals $\hat{q}^i$ for each sample $i \in B^t$ to the server. 
For each $i \in \B^t$ the server estimates the embedding sum as $\tilde{h}^i = \frac{C}{\beta b} (\hat{q}^i - \frac{bM}{2})$ (lines 9-10).
We let $\tilde{h}^t$ denote the set of noisy embedding sums.

The server calculates the gradient 
of $\Loss_{\B}$ with respect to $\hat{h}$ for the minibatch using $\tilde{h}^t$, denoted $\nabla_{\hat{h}} \Loss_{\B}(\thetab_0^t , \tilde{h}^{t})$ and sends this information to the parties for local parameter updates (line 11).
Then the server calculates the stochastic gradient of $\Loss$ with respect to its own parameters, denoted $\nabla_0 \Loss_{\B}(\thetab_0^t , \tilde{h}^{t})$, and uses this gradient to update its own model parameters with learning rate $\eta$ (line 13).
Finally, each party uses the partial derivative received from the server to compute the partial derivative of $\Loss_{\B^t}$ with respect to its local model parameters using the chain rule as:
\begin{align*}
    \nabla_m \Loss_{\B^t}(\Thetab^t) =  \nabla_m h_m(\thetab_m^t; \Xb_m^{\B^t}) \nabla_{h_m} \tilde{h}^{t}  \nabla_{\hat{h}} \Loss_{\B^t}(\thetab_0^t, \tilde{h}^t). 
\end{align*}
Note that $\nabla_{h_m} \tilde{h}^{t}$ is the identity operator.
The party then updates its local parameters (lines 16-18) using this partial derivative, with learning rate $\eta$.

\paragraph{Information Sharing.}
There are two places in Algorithm~\ref{vflmpc.alg} where information about $\Xb$ is shared. The first is when the server learns the sum of the embeddings for each sample in a minibatch (line 9). We protect the inputs to this computation via PBM and MPC. The second is when the server sends $\nabla_{\hat{h}} \Loss_{\B}(\thetab_0^t , \tilde{h}^{t})$ to each party. By the post-processing property of DP, this gradient retains the same privacy protection as the sum computation. We give a formal analysis of the algorithm privacy in Section~\ref{analysis.sec}.

\paragraph{Communication Cost.} We now discuss the communication cost of Algorithm~\ref{vflmpc.alg}. Each party sends its masked quantized embedding at a cost of $O(P \log(bM))$ bits, as detailed in~\ref{bg.sec} and the cumulative cost for $M$ parties and mini-batch of size $B$ becomes $O(BMP\log(bM))$.
In the back propagation, the server sends the partial derivatives without quantization to each party, which is the most costly message exchanging step. 
Nevertheless, we save a significant number of bits when the parties send their masked quantized embedding to the server. 
Let $F$ be the number of bits to represent a floating point number. Then the cost of sending these partial derivatives to $M$ parties is $O(B M P F)$. The total communication cost for Algorithm~\ref{vflmpc.alg} is $O(TBMP ( \log(bM) + F))$.

\section{Analysis}\label{analysis.sec}

We now present our theoretical results with respect to the privacy and convergence of PBM-VFL, and we provide a discussion of the tradeoffs between them. 

\subsection{Privacy}

In this section, we analyze the privacy budget of Algorithm~\ref{vflmpc.alg} with respect to both feature and sample privacy. 

\subsubsection{Feature Privacy}
We first give an accounting of the privacy budget across $T$ iterations of our algorithm. 

\begin{theorem}  \label{privacybudget.thm} 
Assume ${\beta \in [0,\frac{1}{4}]}$ and $b \in \mathbb{N}$. Algorithm~\ref{vflmpc.alg} after $T$ iterations satisfies $(\alpha,\epsilon^\mathit{feat}_\mathit{final}(\alpha))$-RDP for feature privacy for $\alpha > 1$ and $\epsilon^{\mathit{feat}}_\mathit{final}(\alpha) = C_0\frac{TBPb\beta^2\alpha}{MN}$ where $C_0$ is a universal constant.
\end{theorem}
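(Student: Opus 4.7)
The plan is to derive the claimed cumulative RDP bound by chaining, in order: the per-scalar-sum PBM guarantee, composition over the $BP$ independent PBM sums within one iteration, privacy amplification from the uniform minibatch subsampling at rate $B/N$, and finally composition over the $T$ training rounds.

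First, by Theorem~\ref{pbm.thm}, a single PBM-protected scalar sum over the $M$ parties (one coordinate of one sample's embedding sum) is $(\alpha, c\,b\beta^2\alpha/M)$-RDP with respect to a change in any single party's contribution to that sum. Under feature-set adjacency, what changes between $d$ and $d'$ is exactly the contribution $h_{m^*}(\thetab_{m^*}; \xb_{m^*}^i)_p$ of the affected party $m^*$ to each such sum, so the per-sum PBM bound applies directly. Since MPC Protocol~0 is perfectly secure (revealing to the server only the sum), and the subsequent release of $\nabla_{\hat{h}}\Loss_{\B}$ to the parties is post-processing of that sum, nothing beyond the sum needs a fresh privacy accounting. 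Conditional on a fixed batch $\B^t$, an iteration performs $BP$ such scalar sums with independent PBM randomness and independent MPC masks; RDP composes additively, yielding a conditional per-iteration guarantee of order $BPb\beta^2\alpha/M$.

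Second, since $\B^t$ is drawn uniformly at random from the $N$ training samples, I would invoke an amplification-by-subsampling theorem for RDP at rate $q = B/N$ (either the PBM-specific subsampling lemma of Chen et al., or a general Poisson-subsampling RDP amplification in the small-$\epsilon$ regime). This introduces the target factor of $1/N$ and reduces the unconditional per-iteration RDP to order $BPb\beta^2\alpha/(MN)$. Finally, because the batch sampling and the PBM noise used in distinct iterations are mutually independent, RDP composes additively over the $T$ rounds and yields
\[
\epsilon^{\mathit{feat}}_{\mathit{final}}(\alpha) \leq C_0 \frac{TBPb\beta^2\alpha}{MN},
\]
for a suitable universal constant $C_0$.

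The main obstacle will be the amplification step. Classical RDP amplification-by-subsampling bounds are stated for sample-level adjacency, where at most one record can appear in the sampled batch, and this asymmetry is what generates the $B/N$ gain. Under feature-set adjacency, \emph{every} point of $\B^t$ carries a contribution from party $m^*$, so the usual argument does not apply verbatim; one must recast it around the randomness over which size-$B$ subset is drawn out of $N$ while keeping the PBM-specific Binomial likelihood-ratio calculation tight. Once this amplification is established, the remainder of the proof is routine RDP bookkeeping: linear composition of the $BP$ intra-iteration sums and of the $T$ iterations.
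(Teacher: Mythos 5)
There is a genuine gap, and it sits exactly where you predicted: the amplification step. The paper explicitly does \emph{not} use amplification by subsampling, and for good reason: in Algorithm~\ref{vflmpc.alg} the server and all parties agree on the minibatch (e.g., via a shared seed), so the identity of the sampled batch is known to the adversary and the secrecy-of-the-sample argument underlying RDP amplification is unavailable. Your own caveat --- that under feature-set adjacency every element of $\B^t$ carries a contribution from the affected party --- is a second, independent reason the standard lemma fails. Since your route needs amplification to manufacture the $1/N$, the proposal as written cannot be completed. There is also an arithmetic mismatch even granting some amplification lemma: subsampling at rate $q=B/N$ applied to a conditional per-iteration loss of order $BPb\beta^2\alpha/M$ would contribute a factor of $q$ (or $q^2$), i.e.\ $B/N$, not the bare $1/N$ you write down, leaving an extra factor of $B$.

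The paper's accounting differs in two places. First, within an iteration it does not compose additively over the $B$ samples: party $m$'s embeddings for distinct samples are functions of disjoint rows of $\Xb_m$, so the $B$ per-sample sum mechanisms act on disjoint pieces of the feature data and \emph{parallel composition} applies; the per-iteration cost for the whole feature set stays at $C_0 Pb\beta^2\alpha/M$ (the factor $P$ for the embedding coordinates is kept, the factor $B$ is not). Second, the $1/N$ enters through counting, not amplification: each sample is processed an expected $TB/N$ times over the run, so standard sequential composition along each sample's occurrences, combined with the disjointness across samples, yields $(TB/N)\cdot C_0 Pb\beta^2\alpha/M = C_0 TBPb\beta^2\alpha/(MN)$. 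Your treatment of the per-sum PBM guarantee, the perfect security of Protocol~0, and post-processing for $\nabla_{\hat{h}}\Loss_{\B}$ all match the paper; the missing idea is replacing ``additive composition within a batch plus subsampling amplification'' with ``parallel composition within a batch plus occurrence counting across iterations.''
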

\begin{proof}
Consider the universe $\mathcal{D}_M$ of embeddings $\langle h_1,h_2,...h_M\rangle$, where two members of $\mathcal{D}_M$ are adjacent if they differ in a single embedding, e.g., $h_1$ and $h'_1$. 
In other words, these embeddings were produced from samples in feature set adjacent datasets. 
Theorem~\ref{pbm.thm} gives the following privacy guarantee when revealing a noisy sum of embeddings:
  \begin{align}
  & D_\alpha (\mathcal{P}_{h_1+...+h_M},\mathcal{P}_{h'_1+...+h_M}) \le \nonumber \\
  & ~~~~~~~~~~~~~~~~~~~~~~~~~~~~~~~ \epsilon^{\mathit{feat}}(\alpha) = C_0\frac{Pb\beta^2\alpha}{M}. \label{diverge.eq}
  \end{align}
The factor of $P$ accounts for the dimension of an embedding.
More formally, given members $d_M,d'_M \in \mathcal{D}_M$ produced from feature set adjacent datasets, we have 
\[
D_\alpha (\mathcal{P}_{\mathit{PBM}(d_M)},\mathcal{P}_{\mathit{PBM}(d'_M)}) \le \epsilon^{\mathit{feat}}(\alpha) = C_0\frac{Pb\beta^2\alpha}{M}
\]
where $\mathit{PBM}(\cdot)$ denotes embedding sum computed using PBM.
We protect an individual embedding when revealing the noisy sum of embeddings via PBM, and the privacy loss $\epsilon^{\mathit{feat}}(\alpha)$ we incur is $C_0\frac{Pb\beta^2\alpha}{M}$.

By Theorem~\ref{pbm.thm}, for a sample $i$, the computation of the sum $\tilde{h}_i^t$ is ($\alpha,\epsilon^{\mathit{feat}}(\alpha)$)-RDP for any $\alpha>1$ and 
$\epsilon^\mathit{feat}(\alpha) = C_0\frac{Pb\beta^2\alpha}{M}$ (as shown in Equation~\ref{diverge.eq}).
To compute $\nabla_{\hat{h}} \Loss_{\B}(\thetab_0^t , \tilde{h}^{t})$, the server applies a deterministic function on $\tilde{h}$. By standard post-processing arguments, the computation provides ($\alpha,\epsilon^\mathit{feat}(\alpha)$)-RDP with the same $\epsilon^\mathit{feat}(\alpha)$.

Next, we extend the above (per sample $i$) guarantee to the full feature data for each party. 
In each training iteration, for each sample in the minibtach, the sum mechanism runs separately, and all embeddings are disjoint. Therefore, we can apply Parallel Composition~\cite{10.1561/0400000042} to account for the privacy of each party's full set of embeddings, which is the party's feature data.
In one iteration, each party's feature data is protected with a guaranteed privacy budget of $C_0\frac{Pb\beta^2\alpha}{M}$.

At each iteration, the algorithm processes a sample at a rate $B/N$, leading to an expected $TB/N$ number of times that each sample is used in training over $T$ iterations. Accounting for privacy loss across all $T$ iterations leads to $ C_0\frac{TBPb\beta^2\alpha}{MN}$, that is, the algorithm protects the full feature data of a party by $\epsilon^\mathit{feat}_{\mathit{final}}(\alpha) = C_0\frac{TBPb\beta^2\alpha}{MN}$. 
\end{proof}

We remark that privacy amplification does not directly apply in VFL as it does in HFL. This is because the parties and the server know exactly which sample is used in each iteration. 
As a result, we apply standard composition.
Since each sample is expected to occur $\frac{TB}{N}$ times, standard composition yields the above result $C_0\frac{TBPb\beta^2\alpha}{MN}$. 


\subsubsection{Sample Privacy}

 We note that sample privacy loss in VFL is larger compared to privacy loss in comparable DP-based privacy-preserving HFL algorithms. This is due to the nature of computation. 
 VFL computation requires revealing the (noisy) sum of embeddings \emph{for each sample}; this is in contrast to HFL, which reveals a noisy aggregate over a mini-batch.

As seen in Equation \ref{diverge.eq} the privacy of a single embedding is protected by $\epsilon^{\mathit{feat}}(\alpha)$. Therefore, one can ask, what is the privacy loss for the entire sample resulting from revealing the noisy sum $h_1+...+h_M$, and how does that privacy loss compare to the privacy loss incurred in comparable HFL? Put another way, suppose we run PBM-VFL and it has a feature privacy budget of $\epsilon^{\mathit{feat}}(\alpha)$; we are interested in computing the resulting per-sample privacy. Formally, we want to bound the divergence
 $D_\alpha (\mathcal{P}_{h_1+...+h_M},\mathcal{P}_{h'_1+...+h'_M})$,
  given the VFL $\epsilon^{\mathit{feat}}(\alpha)$ bound of $C_0\frac{Pb\beta^2\alpha}{M}$ from Equation~\ref{diverge.eq}. (Recall that $\mathcal{P}_{h_1+...+h_M}$ stands for $\mathcal{P}_{\mathit{PBM}(\langle h_1,...,h_M\rangle)}$.)

We prove the following theorem: 

\begin{theorem} \label{privacy.thm}
 Let $\epsilon^{\mathit{feat}}(\alpha) = C_0\frac{Pb\beta^2\alpha}{M}$. Then we have $D_\alpha(\mathcal{P}_{h_1+...+h_M}, \mathcal{P}_{h'_1+...+h'_M}) \le C_0\frac{Pb\beta^2S_M(\alpha)}{M}$ where $S_M(\alpha) = \bigg((2^{M+1}-2^{M-1}-2)\alpha-(3\cdot2^{M-1}-3M)+\frac{2^{M-1}-1}{2^{M-2}(\alpha-1)}\bigg)$ 
 for every $M \ge 2$.
\end{theorem}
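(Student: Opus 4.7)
The plan is to reduce the $M$-embedding (sample-adjacent) change to a chain of single-embedding (feature-adjacent) changes and then apply the weak triangle inequality for R\'enyi divergence iteratively. First I would define the interpolating distributions
\[
P_k := \mathcal{P}_{h'_1+\cdots+h'_k+h_{k+1}+\cdots+h_M}, \qquad k = 0,1,\ldots,M,
\]
so that $P_0 = \mathcal{P}_{h_1+\cdots+h_M}$, $P_M = \mathcal{P}_{h'_1+\cdots+h'_M}$, and any two consecutive $P_k,P_{k+1}$ arise from a feature-adjacent pair. Equation~\ref{diverge.eq} then gives the single-step bound $D_s(P_k\|P_{k+1}) \le C_0 P b \beta^2 s / M$ for every order $s>1$.

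The main tool is the weak triangle inequality for R\'enyi divergence, which follows from H\"older's inequality with conjugate exponents $p=q=2$: for any three distributions $P,Q,R$ and any $\alpha>1$,
\[
D_\alpha(P\|R) \;\le\; \frac{\alpha-1/2}{\alpha-1}\, D_{2\alpha}(P\|Q) + D_{2\alpha-1}(Q\|R).
\]
I would apply this inequality by peeling one embedding at a time from the front: take $P=P_0$, $Q=P_1$, $R=P_M$ so that the single-step term $D_{2\alpha}(P_0\|P_1)$ is bounded directly by the feature-privacy guarantee, while the residual $D_{2\alpha-1}(P_1\|P_M)$ is attacked recursively with one fewer change. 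After $M-1$ peelings the residual is itself a single-step divergence, handled directly. The R\'enyi orders encountered follow the recursion $\alpha_0 = \alpha$, $\alpha_{j+1} = 2\alpha_j - 1$, which solves to $\alpha_j = 2^j(\alpha-1) + 1$.

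Collecting contributions, the bound I expect is
\[
D_\alpha(P_0\|P_M) \;\le\; \frac{C_0 P b \beta^2}{M}\!\left(\alpha_{M-1} + \sum_{j=0}^{M-2}\frac{\alpha_j(2\alpha_j-1)}{\alpha_j - 1}\right).
\]
Using the algebraic identity $\frac{\alpha_j(2\alpha_j-1)}{\alpha_j-1} = 2\alpha_j + 1 + \frac{1}{\alpha_j-1}$ and the substitution $\alpha_j - 1 = 2^j(\alpha-1)$ reduces the right-hand side to three elementary sums in $2^j$: a sum of $2^j\alpha$-terms, a sum of constants, and a sum of $\frac{1}{2^j(\alpha-1)}$-terms. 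Combining them should deliver exactly the coefficient $2^{M+1}-2^{M-1}-2$ on $\alpha$, the constant $-(3\cdot 2^{M-1}-3M)$, and the $\frac{2^{M-1}-1}{2^{M-2}(\alpha-1)}$ term that together constitute $S_M(\alpha)$.

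The main obstacle is choosing the right decomposition. Placing the multiplicative factor $\frac{\alpha-1/2}{\alpha-1}$ on the $(M-1)$-step residual instead of on the single-step piece accumulates an extra factor per peel and produces a strictly looser bound that does not collapse to $S_M(\alpha)$; the ``peel-from-the-front'' pattern is what makes the telescoping work. Once the decomposition is fixed, the rest is careful bookkeeping of the orders $\alpha_j$ and of the three geometric sums, which I would sanity-check against the base case $M=2$, where the bound reduces cleanly to $4\alpha + \frac{1}{\alpha-1}$, matching $S_2(\alpha)$.
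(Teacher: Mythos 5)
Your proposal is correct and follows essentially the same route as the paper: both iteratively apply Mironov's weak triangle inequality (Corollary 4), peeling off one embedding per step, placing the $\frac{\alpha-1/2}{\alpha-1}$ prefactor on the single-embedding term at order $2\alpha$ and recursing on the multi-embedding residual at order $2\alpha-1$; the paper organizes this as an induction on the number of differing embeddings while you unroll it into an explicit sum over the orders $\alpha_j = 2^j(\alpha-1)+1$, and your closed form indeed evaluates to $S_M(\alpha)$.
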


We establish the theorem by making use of our specific aggregation function (summation) and known $\epsilon^{\mathit{feat}}(\alpha) = C_0\frac{Pb\beta^2\alpha}{M}$. The advantage of this approach over using Mironov's general RDP group privacy result (Proposition 2 in~\cite{Mironov_2017}), is that it allows us to obtain a tighter bound and it imposes no restriction on $\alpha$ (other than the standard restriction $\alpha > 1$).

 We demonstrate the case for $M=2$ below and present the full proof in the appendix.
 The proof is by induction on the terms in the sum and follows the intuition. For $M=2$, let $\mathcal{P} = \mathcal{P}_{h_1+h_2}, \mathcal{Q} = \mathcal{P}_{h'_1+h'_2}$, and $\mathcal{R} = \mathcal{P}_{h'_1+h_2}$. 
 
Mironov~\cite{Mironov_2017} states the following inequality (Corollary 4) for arbitrary distributions $\mathcal{P},\mathcal{Q}$ and $\mathcal{R}$ with common support: 
 \[
 D_\alpha(\mathcal{P},\mathcal{Q}) \le \frac{\alpha - 1/2}{\alpha - 1}D_{2\alpha}(\mathcal{P},\mathcal{R}) + D_{2\alpha-1}(\mathcal{R},\mathcal{Q}).
 \]
 
 Using $\epsilon^\mathit{feat}(\alpha)$ and plugging into Corollary 4 above gives
 \[
 D_\alpha(\mathcal{P},\mathcal{Q}) \le \frac{\alpha - 1/2}{\alpha - 1}C_0\frac{Pb\beta^2 2\alpha}{M} + C_0\frac{Pb\beta^2(2\alpha-1)}{M}.
 \]
 Simplification yields the expected term:
 \[
 D_\alpha(\mathcal{P},\mathcal{Q}) \le C_0\frac{Pb\beta^2\bigg(2^2\alpha + \frac{1}{\alpha-1}\bigg)}{M}.
 \]

\begin{remark}[Comparison with HFL]
 One can easily show that $S_M(\alpha) > M\alpha$ holds for $M \ge 2$. Thus, $C_0\frac{Pb\beta^2S_M(\alpha)}{M} \ge C_0 Pb \beta^2\alpha$, and so $\epsilon^{\mathit{sample}}(\alpha) = C_0 Pb\beta^2\alpha$ is a lower bound on the per-sample privacy loss. In contrast, in a comparable HFL computation with PBM~\cite{chen2022poisson}, the per-sample privacy loss is bounded by $C_0\frac{Pb\beta^2\alpha}{B}$, where $B$ is the number of samples in a distributed mean computation. This is expected --- vertical distribution of data causes the algorithm to reveal a noisy sum for each sample, while horizontal distribution reveals a noisy sum of $B$ gradients of individual samples.
\end{remark}

\subsection{Convergence}

We next present our theoretical result on the convergence of Algorithm~\ref{vflmpc.alg}. The proof is provided in the appendix.
\begin{theorem} \label{conv.thm}
Under Assumptions~\ref{smooth.assum}-\ref{embedgrad.assum}, if $\eta < \frac{1}{2L}$, then
the average squared gradient over $T$ iterations of Algorithm~\ref{vflmpc.alg} satisfies:
\begin{align}
& \frac{1}{T} \sum_{t=0}^{T-1} \| \nabla \Loss(\Thetab^t) \|^2  \leq \nonumber \\ 
&~~~~~~~~\frac{2 (  \Loss(\Thetab^0) - \mathbb{E}_{t} (\Loss (\Thetab^{T}))}{\eta T} \nonumber + 2 L \eta \frac{ \sigma^2}{B} \nonumber \\
&~+ (1 + 2 L \eta) \left(\frac{C^2 M P(L_0^2 + L_{\hat{h}}^2 \sum_{m=1}^M H_m^2)}{4\beta^2 b}\right). \label{convthm.eq}
\end{align}
\end{theorem}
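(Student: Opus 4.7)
The plan is to carry out a standard SGD descent argument for nonconvex objectives, adapted to account for two sources of stochasticity: the mini-batch sampling of $\B^t$ and the PBM quantization injected via $\tilde{h}^t$. The starting point is Assumption~\ref{smooth.assum} Part~1, which yields the descent inequality
\begin{align*}
\Loss(\Thetab^{t+1}) \leq \Loss(\Thetab^t) - \eta \langle \nabla \Loss(\Thetab^t), g^t \rangle + \frac{L\eta^2}{2} \lrVert{g^t}^2,
\end{align*}
where $g^t$ denotes the concatenated stochastic gradient actually applied in lines 13 and 18 of Algorithm~\ref{vflmpc.alg}, that is, the gradient in which every occurrence of the true embedding sum $\hat{h}^t$ has been replaced by the PBM-estimated $\tilde{h}^t$.

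The key technical step is to control $g^t - \nabla \Loss_{\B}(\Thetab^t)$, i.e.\ the perturbation of the stochastic gradient caused by substituting $\tilde{h}^t$ for $\hat{h}^t$. For the server block, the difference is exactly $\nabla_0 \ell(\thetab_0^t,\tilde{h}^t) - \nabla_0 \ell(\thetab_0^t,\hat{h}^t)$, so Assumption~\ref{smooth.assum} Part~2 bounds its norm (per sample) by $L_0 \lrVert{\tilde{h}^{t,i} - \hat{h}^{t,i}}$. For each party $m$, applying the chain rule and Assumptions~\ref{smooth.assum} Part~2 and~\ref{embedgrad.assum} gives a per-sample bound of $H_m L_{\hat{h}} \lrVert{\tilde{h}^{t,i} - \hat{h}^{t,i}}$. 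Averaging over the mini-batch (via Jensen), summing across server and parties, and invoking Theorem~\ref{pbm.thm} componentwise (variance $\tfrac{C^2 M}{4\beta^2 b}$ per scalar, $P$ components per sample) yields
\begin{align*}
\mathbb{E}\lrVert{g^t - \nabla \Loss_{\B}(\Thetab^t)}^2 \leq \frac{C^2 M P\bigl(L_0^2 + L_{\hat{h}}^2 \sum_{m=1}^M H_m^2\bigr)}{4\beta^2 b} =: \sigma_{\mathrm{PBM}}^2.
\end{align*}

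With this bound in hand, take expectations in the descent inequality. Assumption~\ref{bias.assum} gives $\mathbb{E}[\nabla \Loss_{\B}(\Thetab^t) \mid \Thetab^t] = \nabla \Loss(\Thetab^t)$, so the bias of $g^t$ relative to $\nabla \Loss(\Thetab^t)$ equals $\mu^t := \mathbb{E}[g^t - \nabla \Loss_{\B}(\Thetab^t)]$, which by Jensen has $\lrVert{\mu^t}^2 \leq \sigma_{\mathrm{PBM}}^2$. Handling the inner product $-\eta \langle \nabla \Loss(\Thetab^t), \mathbb{E}[g^t]\rangle$ by Young's inequality produces a $-\tfrac{\eta}{2}\lrVert{\nabla \Loss(\Thetab^t)}^2$ descent term plus a $\tfrac{\eta}{2}\lrVert{\mu^t}^2$ slack. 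The quadratic term $\tfrac{L\eta^2}{2}\mathbb{E}\lrVert{g^t}^2$ is expanded by two applications of the triangle inequality in expectation (splitting off $\nabla \Loss_\B - \nabla \Loss$ and $g^t - \nabla \Loss_\B$), using Assumption~\ref{var.assum} to pick up $\sigma^2/B$ and the PBM bound above to pick up $\sigma_{\mathrm{PBM}}^2$.

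Combining the pieces and using $\eta < 1/(2L)$ to keep the coefficient of $\lrVert{\nabla \Loss(\Thetab^t)}^2$ on the left at least $\tfrac{\eta}{2}$, telescoping the resulting inequality across $t = 0,\dots,T-1$ and dividing by $T$ produces the claimed bound, with the $\tfrac{2(\Loss(\Thetab^0) - \mathbb{E}\Loss(\Thetab^T))}{\eta T}$ term coming from the telescoping, the $2L\eta \sigma^2/B$ term from the mini-batch variance in $\mathbb{E}\lrVert{g^t}^2$, and the $(1 + 2L\eta)\sigma_{\mathrm{PBM}}^2$ term aggregating the two ways the PBM error enters (through the bias of the inner product and through the quadratic). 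The main obstacle I expect is the quadratic term: because $\nabla_0 \ell$ and $\nabla_{\hat{h}} \ell$ are nonlinear in their second argument, $g^t$ is a \emph{biased} estimator of $\nabla \Loss_{\B}(\Thetab^t)$ despite PBM being unbiased in $\tilde{h}^t$, so the cross term between the minibatch noise and the PBM noise does not vanish and must be absorbed via a triangle-inequality-plus-Jensen argument rather than exact orthogonality, which is exactly what forces the smoothness constants $L_0, L_{\hat{h}}$ of Assumption~\ref{smooth.assum} Part~2 (and Assumption~\ref{embedgrad.assum}) to appear multiplicatively with $\sigma_{\mathrm{PBM}}^2$ rather than as additive lower-order noise.
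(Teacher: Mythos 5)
Your proposal follows essentially the same route as the paper's proof: the central estimate is identical to the paper's Lemma~\ref{gradvar.lem} (blockwise control of $G^t - \nabla \Loss_{\B^t}$ via $L_0$, $L_{\hat{h}}$, $H_m$ and the per-coordinate PBM variance $\tfrac{C^2 M}{4\beta^2 b}$, times $P$ components), and the assembly via the descent lemma, Young's inequality on the inner product, and telescoping matches as well. One small point: to land exactly on the stated coefficients $2L\eta\,\sigma^2/B$ and $(1+2L\eta)$ under only $\eta < \tfrac{1}{2L}$, the split of $\mathbb{E}\lrVert{G^t}^2$ into the minibatch-variance and full-gradient pieces must use the exact bias--variance identity afforded by Assumption~\ref{bias.assum} (as the paper does), rather than a second triangle inequality, which would double those coefficients and tighten the step-size requirement.
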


The first term in the bound in (\ref{convthm.eq}) is determined by the difference between the initial loss and the loss after $T$ training iterations. This term vanishes as $T$ goes to infinity. The second term is the convergence error associated with variance of the stochastic gradients and the Lipschitz constant $L$. The third term is the convergence error arises from the DP noise in the sums of the embeddings. This error depends on the inverse of $b$ and the inverse square of $\beta$, which controls the degree of privacy. As $b$ or $\beta$ increases, this error decreases.

\begin{remark}[Asymptotic convergence]
If $\eta = \frac{1}{\sqrt{T}}$ and $B$ is independent of $T$ then  
\begin{align}
& \frac{1}{T} \sum_{t=0}^{T-1} \| \nabla \Loss(\Thetab^t) \|^2 = O(\sqrt{T} + \mathcal{E})
\end{align}
where $\mathcal{E} = O(\frac{1}{\beta^2  b})$ is the error due to the PBM. 
\end{remark}

We note that if the embedding sums are computed exactly, giving up privacy, the algorithm reduces to standard SGD; the third term in (\ref{convthm.eq}) becomes 0, giving a convergence rate of $O(\frac{1}{\sqrt{T}})$.

\subsection{Tradeoffs}\label{tradeoff}
We observe that there is a connection between the algorithm privacy (both feature and sample), communication cost, and convergence behavior. Let us consider a fixed value for the privacy parameter $\beta$. We can reduce the convergence error  in Theorem~\ref{conv.thm} by increasing $b$, but this results in less privacy guarantee. Higher $b$ also enlarges the algorithm communication cost, but so long as $\log b < F$, this increase is negligible.

Similarly, if we fix the communication cost of the algorithm over $T$ iterations, we can increase the privacy by decreasing $\beta$. This, in turn, leads to an increase in the convergence error on the order of $\frac{1}{\beta^2}$.

The number of parties $M$ also affects the privacy budget and convergence error. With larger $M$, each party gets more protection for their data but with larger convergence error.

Since the budget for feature and sample privacy only differ by a factor of $M$ as shown in previous subsections, the tradeoffs above apply to both feature and sample privacy.

\section{Experiments}\label{exp.sec}

\begin{figure*}[ht]
\centering
\begin{subfigure}{0.24\textwidth}
\centering
  \includegraphics[width=\textwidth]{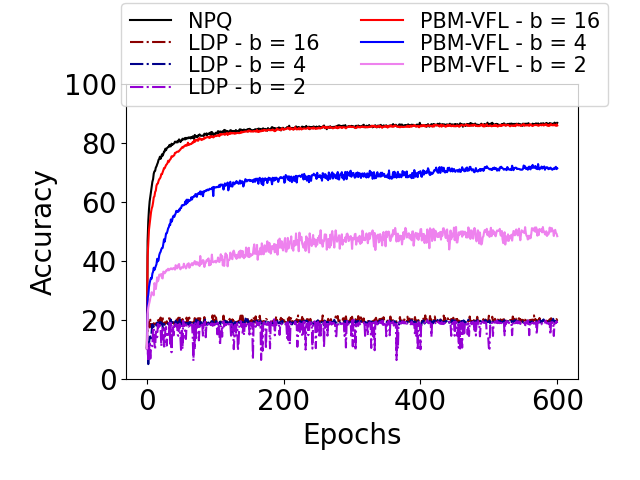}
  \caption{Cifar-10 with $4$ parties\\ and $\beta = 0.1$.}
  \label{resulta.fig}
\end{subfigure}
\begin{subfigure}{0.24\textwidth}
\centering
  \includegraphics[width=\textwidth]{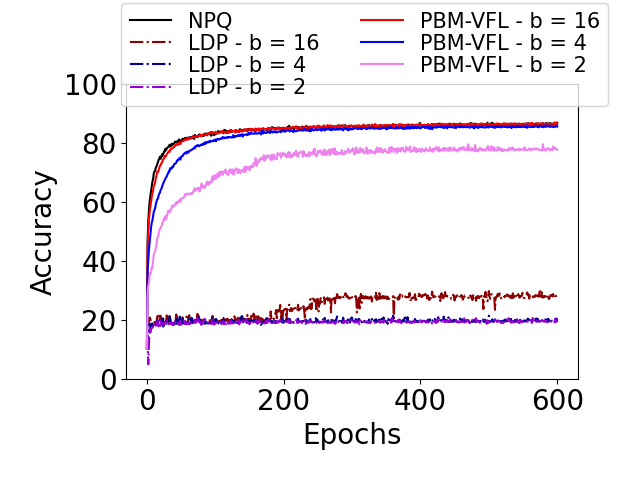}
  \caption{Cifar-10 with $4$ parties\\ and $\beta = 0.15$.}
  \label{resultb.fig}
\end{subfigure}
\begin{subfigure}{0.24\textwidth}
\centering
  \includegraphics[width=\textwidth]{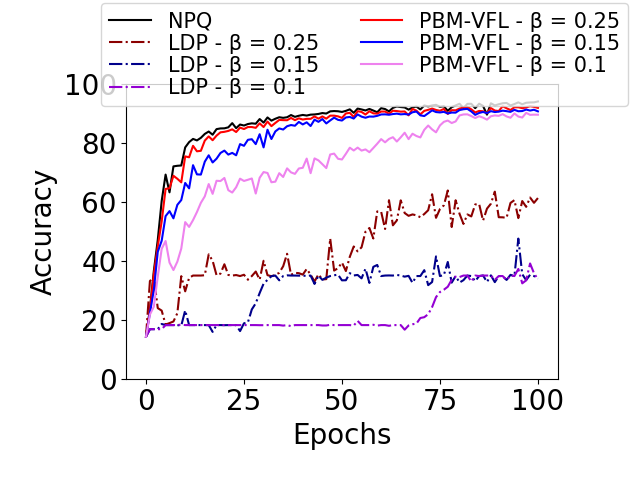}
  \caption{Activity with $5$ parties\\ and $b=64$.}
  \label{resultc.fig}
\end{subfigure}
\begin{subfigure}{0.24\textwidth}
\centering
  \includegraphics[width=\textwidth]{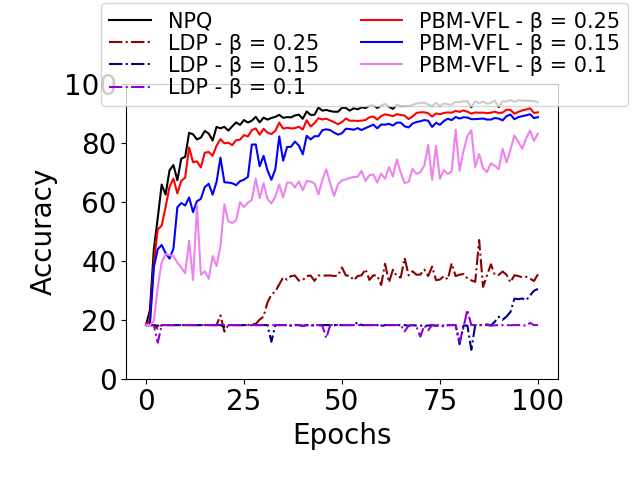}
  \caption{Activity with $10$ parties\\ and $b=64$.}
  \label{resultd.fig}
\end{subfigure}
\caption{Accuracy by epoch on Cifar-10 and Activity. We compare PBM-VFL with No Privacy and Quantization (NPQ) and Local DP (LQP) using Gaussian noise with variance $\sigma_G^2 = \frac{2 M}{b \beta^2}$.}
\label{results.fig}
\end{figure*}

We present experiments to evaluate the tradeoff in accuracy, privacy, and communication cost of PBM-VFL.
In this section, we give results with the following two datasets. The appendix provides more experimental results with these datasets, as well as three 
additional datasets.

\textbf{Activity} \cite{s20082200}:
Time-series positional data of $10,300$ samples with $560$ features for classifying $6$ human activities.
We run experiments with $5$ and $10$ parties, where each party holds $112$ or $56$ features.

\textbf{Cifar-10} \cite{Krizhevsky2009LearningML}:
A dataset of $60,000$ images with $10$ object classes for classification.
We experiment with $4$ parties, each holding a different quadrant of the images.

We use a batch size $B = 100$ and embedding vector size $P = 16$ for both datasets. We train Activity for $100$ epochs and Cifar-10 for $600$ epochs, both with learning rate $0.01$. We consider different sets of PBM parameters: ${b\in \{4,8,16,64,128\}}$ and $\beta \in \{0.1, 0.15, 0.25\}$ for Activity, and ${b\in \{2,4,16\}}$ and $\beta \in \{0.05, 0.1, 0.15\}$ for Cifar-10.

We compare PBM-VFL with two baselines: VFL without privacy and without quantization (NPQ), and VFL with Local DP (LDP). The NPQ method is Algorithm~\ref{vflmpc.alg} without Secure Aggregation or PBM. The LDP method is Algorithm~\ref{vflmpc.alg} without Secure Aggregation, but with local DP provided by adding noise to each embedding before aggregation.
To achieve the same level of feature privacy as PBM, LDP noise is drawn from the Gaussian distribution with variance $\sigma_G^2 = \frac{2 M}{b \beta^2}$. (Since $\epsilon(\alpha)$ is in terms of $b$ and $\beta$, $\sigma$ is in terms of $b$ and $\beta$ as well.)


\textbf{Accuracy and Privacy.} 
Figures~\ref{resulta.fig} and \ref{resultb.fig} show the acccuracy for Cifar-10 for various values of $b$ for $\beta = 0.1$ and $\beta=0.15$.
As discussed in Section \ref{tradeoff}, a higher value of $b$ reduces the convergence error. This is illustrated in both figures: 
the accuracy of PBM-VFL increases as $b$ increases, and $b = 16$ yields almost the same test accuracy as NPQ, while still providing privacy.
In addition, by comparing Figure \ref{resulta.fig} 
and Figure \ref{resultb.fig}, 
we observe that a larger $\beta$ results in better performance for PBM-VFL for all values of $b$.
This makes intuitive sense since larger $\beta$ means that there is less DP noise in the training algorithm. However, this comes with the cost of less privacy.
Notably, PBM-VFL significantly outperforms baseline LDP.
This is expected: since in LDP, each noisy embedding is revealed individually, more noise is required to protect it, and the impact of this noise accumulates over training.

\textbf{Accuracy and Communication Cost.} We summarize the communication cost for PBM-VFL to reach a target accuracy of 80\% on the Cifar-10 dataset in Table \ref{table1}. As described in Section \ref{algorithm.sec}, we compute the total communication cost as $TBMP ( \log(bM) + F)$, with $T$ being the number of iterations needed to reach the training accuracy target and $F = 32$.
Note that for $(b, \beta) = (4, 0.05), (8, 0.05)$, the model does not reach the target accuracy due to high privacy noise levels.
We observe similar trends as in the previous experiments; 
for a given $b$, larger values of $\beta$ reach the accuracy goal in fewer epochs, resulting in lower communication costs. 
Additionally, for a given $\theta$, the number of epochs required to reach the target decreases as we increase $b$. Interestingly,
this results in lower total communication cost, even though a larger $b$  has a higher communication cost per iteration. 
We also observe that with larger $(b,\beta)$, such as $(8,0.15),(16,0.1)$, PBM-VFL significantly reduces communication cost compared to NPQ (no privacy and no quantization) while providing protection to the training data. 

\begin{table}[h]
\caption{Communication cost of PBM-VFL and NPQ on Cifar-10 to reach a train accuracy target of $80\%$.}\label{table1}
\begin{center}
\small
\begin{tabular}{ | c | c | c | c | }
\hline
$b$ & $\beta$ & Number of & Communication \\
& & Epochs & Cost (MB) \\
\hline\hline
& $0.05$ & $\infty$ & $\infty$ \\
$4$ & $0.1$ & $411$ & $4570$ \\ 
& $0.15$ & $54$ & $600$ \\
\hline
& $0.05$ & $\infty$ & $\infty$ \\
$8$ & $0.1$ & $64$ & $725$ \\
& $0.15$ & $38$ & $430$ \\
\hline
& $0.05$ & $442$ & $5110$ \\
$16$ & $0.1$ & $41$ & $470$ \\
& $0.15$ & $30$ & $345$ \\
\hline
\multicolumn{2}{|c|}{NPQ} & $21$ & $4300$ \\
\hline
\end{tabular}
\end{center}
\end{table}

\textbf{Accuracy and Number of Parties.} Figures \ref{resultc.fig} and \ref{resultd.fig} show the  accuracy for different numbers of parties for the Activity dataset, with $b=64$. 
The results show an overall decrease in test accuracy when we increase the number of parties. This is consistent with the theoretical results that convergence error increases as the number of parties grows. We also observe that as in Figures~\ref{resulta.fig} and \ref{resultb.fig}, larger values of $\beta$ results in better performance for PBM-VFL, and that PBM-VFL  outperforms baseline LDP in all cases.

\section{Conclusion}\label{conc.sec}

We presented PBM-VFL, a privacy-preserving and communication-efficient algorithm for training VFL models.
We introduced the novel notion of feature privacy and discussed the privacy budget for feature and sample privacy.
We analyzed privacy and convergence behavior and proved an end-to-end privacy bound as well as a convergence bound.
In future work, we seek to develop appropriate attacks such as data reconstruction and privacy auditing to demonstrate the provided privacy in the VFL model in practice.

\section*{Acknowledgments}
This work was supported by NSF grants CNS-1814898 and CNS-1553340, and by the Rensselaer-IBM AI Research Collaboration (\url{http://airc.rpi.edu}), part of the IBM AI Horizons Network.

\bibliography{citation}
\bibliographystyle{icml2024}

\newpage
\appendix
\onecolumn

\section{Proofs}

\subsection{Proof of Theorem \ref{privacy.thm}} \label{pravacy_proofs.app}
We prove the theorem by induction over the terms of the sum with $k$ ranging from 1 to $M-1$. The base case is $k=1$ and $\mathcal{P}=\mathcal{P}_{h_1+...+h_{M-1}+h_M}$, $\mathcal{Q}=\mathcal{P}_{h_1+...+h'_{M-1}+h'_M}$, and $\mathcal{R}=\mathcal{P}_{h_1+...+h_{M-1}+h'_M}$. Analogously to the reasoning for $M=2$ we presented in the paper, direct application of Corollary 4 yields
\begin{align}
D_\alpha(\mathcal{P}||\mathcal{Q}) &= D_\alpha(\mathcal{P}_{h_1+...+h_{M-1}+h_M} || \mathcal{P}_{h_1+...+h'_{M-1}+h'_M}) \\
&\le \bigg(2^2\alpha + \frac{1}{\alpha-1}\bigg)\frac{Pb\beta^2}{M}
\nonumber
\end{align}
which fits into the general form in the theorem.

For the inductive step, let $\mathcal{P}=\mathcal{P}_{h_1...+h_{M-k+1}+...+h_M}$, $\mathcal{Q}=\mathcal{P}_{h_1...+h'_{M-k+1}+...+h'_M}$, and let $\mathcal{R}=\mathcal{P}_{h_1...+h_{M-k+1}+h_{M-k}...+h'_M}$ and assume 
\begin{align}
D_\alpha(\mathcal{P}_{h_1...+h_{M-k+1}+...+h_M} || \mathcal{P}_{h_1...+h'_{M-k+1}+...+h'_M}) \le
\bigg((2^{k+1}-2^{k-1}-2)\alpha - T_k + \frac{2^{k-1}-1}{2^{k-2}(\alpha-1)}\bigg)\frac{Pb\beta^2}{M} \label{ind_hypothesis}
\end{align}
where $T_k = 3\cdot2^{k-1} - 3k$.

We need to show 
\begin{align}
D_\alpha(\mathcal{P}_{h_1...+h_{M-k}+...+h_M} || \mathcal{P}_{h_1...+h'_{M-k}+...+h'_M}) &=
D_\alpha(\mathcal{P}_{h_1...+h'_{M-k}+...+h'_M} || \mathcal{P}_{h_1...+h_{M-k}+...+h_M}) \\
&\le \bigg((2^{k+2}-2^{k}-2)\alpha - T_{k+1} + \frac{2^{k}-1}{2^{k-1}(\alpha-1)}\bigg)\frac{Pb\beta^2}{M}
\label{theorem_statement}
\end{align}
where $T_{k+1} = 3\cdot2^{k} + 3(k+1)$.

We will apply Corollary 4 again. Let $\mathcal{P}'=\mathcal{P}_{h_1...+h'_{M-k}+...+h'_M}$, 
$\mathcal{Q}' = \mathcal{P}_{h_1...+h_{M-k}+...+h_M}$, and let $\mathcal{R}' = \mathcal{P}_{h_1...+h_{M-k}+h'_{M-k+1}...+h'_M}$. By Corollary 4
\begin{align}
D_\alpha(\mathcal{P}'||\mathcal{Q}') \le 
&\frac{2\alpha-1}{2(\alpha-1)}D_{2\alpha}(\mathcal{P}'||\mathcal{R}') + D_{2\alpha-1}(\mathcal{R}'||\mathcal{Q}') .
\label{cor4} 
\end{align}
We have
\begin{align}
D_{2\alpha}(\mathcal{P}'||\mathcal{R}') \le \frac{b\beta^2 2\alpha}{M} \label{two_alpha_term}
\end{align}
as $\mathcal{P}'$ and $\mathcal{R}'$ differ in a single embedding, $h'_{M-k}$ and $h_{M-k}$. By the inductive hypothesis (\ref{ind_hypothesis}) we have
\begin{align}
D_\alpha(\mathcal{R}'||\mathcal{Q}') \le 
\bigg((2^{k+1}-2^{k-1}-2)\alpha - T_k + \frac{2^{k-1}-1}{2^{k-2}(\alpha-1)}\bigg)\frac{Pb\beta^2}{M}
\end{align}
and plugging in $2\alpha-1$ for $\alpha$ yields
\begin{align}
D_{2\alpha-1}(\mathcal{R}'||\mathcal{Q}') \le 
\bigg((2^{k+1}-2^{k-1}-2)(2\alpha-1) - T_k + \frac{2^{k-1}-1}{2^{k-1}(\alpha-1)}\bigg)\frac{Pb\beta^2}{M} . \label{two_alpha_minus_one_term}
\end{align}
Substituting (\ref{two_alpha_term}) and (\ref{two_alpha_minus_one_term}) into (\ref{cor4}), then grouping by terms for $\alpha$, a scalar term $T$ and a term for $\frac{1}{\alpha-1}$, yields
\begin{align}
D_\alpha(\mathcal{P}'||\mathcal{Q}') \le 
\bigg((2^{k+2}-2^k-2)\alpha - T_{k+1} + \frac{2^{k}-1}{2^{k-1}(\alpha-1)}\bigg)\frac{Pb\beta^2}{M}
\end{align}
where $T_{k+1} = 2^{k+1}-2^{k-1}-2-1+T_k = 3\cdot2^{k-1} - 3 + T_k$. Since $T_k = 3\cdot2^{k-1}-3k$ (by (\ref{ind_hypothesis})), it follows immediately that $T_{k+1} = 3\cdot2^{k}-3(k+1)$. This is precisely the $S_{k+1}(\alpha)$ term in (\ref{theorem_statement}).

\subsection{Proof of Theorem \ref{conv.thm}} \label{proofs.app}

Let $\Thetab^t$ be the set of model parameters in iteration $t$. 
For brevity, we let $\hat{h}_i^t$ denote $\hat{h}(\thetab_1^t, \ldots, \thetab_M^t; \xb^i)$.
We can write the update rule for $\Thetab$ as 
\begin{align}
\Thetab^{t+1} &= \Thetab^t - \eta G^t 
\end{align}
with
\begin{align}
G^t := \frac{1}{B} \sum_{i \in \B^t} \nabla \ell(\thetab_0^t, \hat{h}_i^t + \varepsilon_i^t)
\end{align}
where $\varepsilon_i^t$ is the $P$-vector of noise resulting from the PBM for the embedding sum of sample $i$ in iteration $t$.

With some abuse of notation, we let $\hat{h}^t$ denote concatenation of the embedding sums  for the minibatch $\mathcal{B}^t$ (without noise) and let $\nabla \Loss_{\B^t}(\thetab_0^t, \hat{h}^t)$ denote the average stochastic gradient of $\Loss$ over minibatch $\B^t$.

We first bound the difference between $G^t$ and $\nabla \Loss_{\B^t}(\thetab_0^t, \hat{h}^t)$ in the following lemma.
\begin{lemma} \label{gradvar.lem}
It holds that
\begin{align}
\mathbb{E}_{\B^t} \| G^t - \nabla \Loss_{\B^t}(\thetab_0^t, \hat{h}^t) \|^2 \leq
\frac{C^2 M P(L_0^2 + L_{\hat{h}}^2 \sum_{m=1}^M H_m^2)}{4 \beta^2 b}
\end{align}
\end{lemma}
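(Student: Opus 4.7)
The plan is to decompose the full parameter gradient into its partial derivatives, bound each piece in terms of the PBM noise vector $\varepsilon_i^t$, and then take expectation using the variance bound from Theorem \ref{pbm.thm}.

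First, I would write
\begin{align*}
G^t - \nabla \Loss_{\B^t}(\thetab_0^t, \hat{h}^t) = \frac{1}{B}\sum_{i\in\B^t}\Bigl(\nabla \ell(\thetab_0^t,\hat{h}_i^t+\varepsilon_i^t)-\nabla \ell(\thetab_0^t,\hat{h}_i^t)\Bigr),
\end{align*}
and apply Jensen's inequality to $\|\cdot\|^2$ to pull the squared norm inside the sum, yielding a bound of the form $\frac{1}{B}\sum_{i\in\B^t}\|\Delta_i^t\|^2$ where $\Delta_i^t$ is the per-sample gradient difference. The expectation $\mathbb{E}_{\B^t}$ (implicitly also over the PBM randomness) then commutes with this sum, so it suffices to bound $\mathbb{E}\|\Delta_i^t\|^2$ uniformly in $i$.

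The main work is bounding $\|\Delta_i^t\|^2$ for a fixed sample. The full parameter gradient $\nabla \ell$ splits into the partial with respect to $\thetab_0$ and, via the chain rule, the partials with respect to each $\thetab_m$ for $m=1,\dots,M$. Noting that $\hat h_i^t$ and $\hat h_i^t+\varepsilon_i^t$ correspond to server inputs $[\thetab_0^{t\,T},\hat h_i^{t\,T}]^T$ and $[\thetab_0^{t\,T},(\hat h_i^t+\varepsilon_i^t)^T]^T$ that differ only in the embedding-sum component, I would apply Assumption \ref{smooth.assum} Part 2 to get
\begin{align*}
\|\nabla_0\ell(\thetab_0^t,\hat h_i^t+\varepsilon_i^t)-\nabla_0\ell(\thetab_0^t,\hat h_i^t)\|&\le L_0\|\varepsilon_i^t\|,\\
\|\nabla_{\hat h}\ell(\thetab_0^t,\hat h_i^t+\varepsilon_i^t)-\nabla_{\hat h}\ell(\thetab_0^t,\hat h_i^t)\|&\le L_{\hat h}\|\varepsilon_i^t\|.
\end{align*}
For the party-$m$ partial, the chain rule gives $\nabla_m\ell = \nabla_m h_m(\thetab_m^t;\xb_m^i)\,\nabla_{\hat h}\ell$ (since $\nabla_{h_m}\tilde h = I$), and Assumption \ref{embedgrad.assum} bounds the Jacobian factor in Frobenius norm by $H_m$. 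Thus the $\thetab_m$-component of $\Delta_i^t$ has norm at most $H_m L_{\hat h}\|\varepsilon_i^t\|$. Summing the squared norms of all components of the gradient gives
\begin{align*}
\|\Delta_i^t\|^2 \le \Bigl(L_0^2 + L_{\hat h}^2\sum_{m=1}^M H_m^2\Bigr)\|\varepsilon_i^t\|^2.
\end{align*}

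Finally, I would take expectation over the PBM randomness. By Theorem \ref{pbm.thm} the estimator for each scalar coordinate of the embedding sum has variance $C^2M/(4\beta^2 b)$, and PBM is applied componentwise over the $P$ coordinates independently, so $\mathbb{E}\|\varepsilon_i^t\|^2 = C^2MP/(4\beta^2 b)$. Substituting back through the chain of inequalities yields the claimed bound, and since the resulting bound is uniform in $i$ and in the minibatch choice, the $\frac{1}{B}\sum_{i\in\B^t}$ and $\mathbb{E}_{\B^t}$ both collapse to the same constant.

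The only subtlety I expect is being careful about the decomposition of $\|\nabla\ell\|^2$ into the sum of squared-norm components for $\thetab_0$ and each $\thetab_m$ (so that the Lipschitz-type bounds compose additively inside the square), and noting explicitly that the componentwise application of PBM to a $P$-dimensional embedding multiplies the per-coordinate variance by $P$. Everything else is a direct application of the stated assumptions.
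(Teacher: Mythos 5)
Your proposal is correct and follows essentially the same route as the paper's proof: decompose the gradient difference into the $\thetab_0$-block and the per-party blocks, apply Jensen over the minibatch, bound the blocks via Assumption~\ref{smooth.assum} Part~2 and Assumption~\ref{embedgrad.assum} (with the chain rule and $\|\nabla_m h_m\|_{\mathcal{F}}\le H_m$), and finish with the per-coordinate PBM variance $C^2M/(4\beta^2 b)$ times $P$. The only difference is cosmetic --- you apply Jensen before splitting into blocks, the paper splits into blocks first --- and both yield the identical bound.
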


\begin{proof}
We first note that 
\begin{align}
\mathbb{E}_{\B^t} \| G^t - \nabla \Loss_{\B^t}(\thetab_0^t, \hat{h}^t) \|^2 = \sum_{m=0}^M \mathbb{E}_{\B^t} \| G_m^t - \nabla_m \Loss_{\B^t}(\thetab_0^t, \hat{h}^t)  \|^2 \label{sumdif.eq}
\end{align}
where $G_m^t$ is the block of $G^t$ corresponding to party $m$.

For $m=0$, using Assumption~\ref{smooth.assum}, we can bound the first term in the summation in (\ref{sumdif.eq}) as 
\begin{align}
\mathbb{E}_{\B^t} \| G_0^t - \nabla_0 \Loss_{\B^t}(\thetab_0, \hat{h})  \|^2 &\leq \frac{1}{B} \sum_{i \in \B^t} \mathbb{E}_{\B^t} \| \nabla \ell(\thetab_0^t, \hat{h}_i^t + \varepsilon_i^t) - \nabla \ell(\thetab_0^t, \hat{h}_i^t) \|^2 \\
&\leq \frac{L_0^2}{B} \sum_{i \in \B^t} \mathbb{E}_{\B^t} \| \varepsilon_i^t \|^2. \label{serverbound.eq}
\end{align}

For $m=1, \ldots M$, by the chain rule, we have
\begin{align}
&G_m^t = \frac{1}{B} \sum_{i \in \mathcal{B}} \nabla_m h_m(\thetab_m^t; \xb_i) \nabla_{h_m} \hat{h}_i^t  \nabla_{\hat{h}} \ell(\thetab_0^t, \hat{h}_i^t + \varepsilon_i^t) \\
&\nabla_m \Loss_{\B^t}(\theta_0^t, \hat{h}^t) = 
\frac{1}{B} \sum_{i \in \mathcal{B}} \nabla_m h_m(\thetab_m^t; \xb_i) \nabla_{h_m} \hat{h}_i^t  \nabla_{\hat{h}} \ell(\thetab_0^t, \hat{h}_i^t). 
\end{align}
It follows that
\begin{align}
\mathbb{E}_{\B^t} \| G_m^t - \nabla_m \Loss_{\B}(\thetab_0^t, \hat{h}^t) \|^2 =
\mathbb{E}_{\B^t} \left( \frac{1}{B^2} \sum_{i \in \mathcal{B}^t} \| \nabla_m h_m(\thetab_m^t; \xb_i) \nabla_{h_m} \hat{h}_i^t   \right.
\left(\nabla_{\hat{h}} \ell(\thetab_0^t, \hat{h}_i^t + \varepsilon_i^t) - \nabla_{\hat{h}} \ell(\thetab_0^t, \hat{h}_i^t\right) ) \|^2 \Bigg).
\end{align}
Noting that $ \nabla_{\hat{h}} \hat{h}_i^t = \bm{I}$, we have
\begin{align}
\mathbb{E}_{\B^t} \| G_m^t - \nabla_m \Loss_{\B}(\thetab_0^t, \hat{h}^t) \|^2
&\leq \frac{1}{B} \sum_{i \in \mathcal{B}^t} \mathbb{E}_{\B^t} \|\nabla_m h_m(\thetab_m^t; \xb_i)\|_{\mathcal{F}}^2 \| \nabla_{\hat{h}} \ell(\thetab_0^t, \hat{h}_i^t + \varepsilon_i^t)
- \nabla_{\hat{h}} \ell(\thetab_0^t, \hat{h}_i^t) \|^2  \label{bounds1.eq} \\
&\leq  \frac{H_m^2 L_{\hat{h}}^2}{B} \sum_{i \in \mathcal{B}^t} \mathbb{E}_{\B^t} \|\varepsilon_i^t \|^2 \label{bounds2.eq}
\end{align}
where (\ref{bounds2.eq}) follows from (\ref{bounds1.eq}) by Assumptions~\ref{smooth.assum} and \ref{embedgrad.assum}.

We combine (\ref{serverbound.eq}) and (\ref{bounds2.eq}) to obtain
\begin{align}
\mathbb{E}_{\B^t} \| G^t - \nabla \Loss_{\B^t}(\thetab_0, \hat{h}^t) \|^2 &\leq
\left(L_0^2 + L_{\hat{h}}^2 \sum_{m=1}^M H_m^2\right) \frac{1}{B} \sum_{i \in \mathcal{B}^t} \mathbb{E}_{\B^t} \|\varepsilon_i^t \|^2  \label{prechen.eq} \\
&\leq \frac{C^2 M P(L_0^2 + L_{\hat{h}}^2 \sum_{m=1}^M H_m^2)}{4 \beta^2 b} \label{postchen.eq}
\end{align}
where (\ref{postchen.eq}) follows from (\ref{prechen.eq}) by Theorem~\ref{pbm.thm}.
\end{proof}

We now prove the main theorem.
\begin{proof}
By Assumption~\ref{smooth.assum}, we have
\begin{align}
\Loss (\Thetab^{t+1}) - \Loss(\Thetab^t)
&\leq -\langle \nabla \Loss(\Thetab^t), \Thetab^{t+1} - \Thetab^t \rangle + \frac{L}{2}\| \Thetab^{t+1} - \Thetab^t  \|^2 \\
&= -\eta \langle  \nabla \Loss(\Thetab^t), G^t \rangle + \frac{L \eta^2}{2} \| G^t \|^2 \\
&= -\eta \langle  \nabla \Loss(\Thetab^t), G^t - \nabla \Loss(\Theta^t) \rangle
- \eta \langle \nabla \Loss(\Theta^t), \nabla \Loss_{\B^t}(\Theta^t) \rangle \nonumber \\
&~~~ + \frac{L \eta^2}{2} \| G^t - \nabla \Loss_{\B^t}(\Theta^t) + \nabla \Loss_{\B^t}(\Theta^t)  \|^2 \\
&\leq -\eta \langle  \nabla \Loss(\Thetab^t), G^t - \nabla \Loss(\Theta^t) \rangle - \eta \langle \nabla \Loss(\Theta^t), \nabla \Loss_{\B^t}(\Theta^t) \rangle \nonumber \\
&~~~ + L \eta^2 \| G^t - \nabla \Loss_{\B^t}(\Theta^t) \|^2  +  L \eta^2 \| \nabla \Loss_{\B^t}(\Theta^t)  \|^2. 
\end{align}
Taking expectation with respect to $t$, conditioned on $\Theta^t$:
\begin{align}
\mathbb{E}_{t} \left(\Loss (\Thetab^{t+1})\right) - \Loss(\Thetab^t) &\leq \frac{\eta}{2} \| \nabla \Loss(\Theta^t) \|^2 + \frac{\eta}{2} \mathbb{E}_t \| G^t - \nabla \Loss_{\B^t}(\Theta^t) \|^2 - \eta \langle \nabla \Loss(\Theta^t) , \mathbb{E}_t \left( \nabla \Loss_{\B^t}(\Theta^t) \right) \rangle \nonumber \\
&~~~ + L \eta^2 \mathbb{E}_t \| G^t - \nabla \Loss_{\B^t}(\Theta^t) \|^2  +  L \eta^2 \mathbb{E}_t \| \nabla \Loss_{\B^t}(\Theta^t)  \|^2 \label{expec1.eq} \\
&= - \frac{\eta}{2} \| \nabla \Loss(\Theta^t) \|^2 + \frac{\eta}{2}\left( 1 + 2L\eta \right) \mathbb{E}_t \| G^t - \nabla \Loss_{\B^t}(\Theta^t) \|^2  \nonumber \\
&~~~ + L \eta^2 \mathbb{E}_t \| \nabla \Loss_{\B^t}(\Theta^t)  \|^2 \label{expec2.eq} \\
&= - \frac{\eta}{2} \| \nabla \Loss(\Theta^t) \|^2
+ \frac{\eta}{2}\left( 1 + 2L\eta \right) \mathbb{E}_t \| G^t - \nabla \Loss_{\B^t}(\Theta^t) \|^2  \nonumber \\
&~~~ + L \eta^2 \mathbb{E}_t \| \nabla \Loss_{\B^t}(\Theta^t) - \nabla \Loss(\Thetab^t)  \|^2 
+ L \eta^2 \mathbb{E}_t \| \nabla \Loss(\Theta^t)  \|^2 \label{expec3.eq} \\
&\leq - \frac{\eta}{2}\left( 1 - 2L \eta \right) \| \nabla \Loss(\Theta^t) \|^2 
+ \frac{\eta}{2}\left( 1 + 2L\eta \right) \mathbb{E}_t \| G^t - \nabla \Loss_{\B^t}(\Theta^t) \|^2  \nonumber \\
&~~~ + L \eta^2 \mathbb{E}_t \| \nabla \Loss_{\B^t}(\Theta^t) - \nabla \Loss(\Thetab^t)  \|^2 \label{expec4.eq}
\end{align}
where (\ref{expec2.eq}) follows from (\ref{expec1.eq}) by Assumption~\ref{bias.assum}, and (\ref{expec3.eq}) follows from (\ref{expec2.eq}) because $A \cdot B  = \frac{1}{2} A^2 + \frac{1}{2} B^2 - \frac{1}{2}(A - B)^2.$

Applying Assumption~\ref{var.assum} and Lemma~\ref{gradvar.lem} to (\ref{expec4.eq}), we obtain
\begin{align}
\mathbb{E}_{t} \left(\Loss (\Thetab^{t+1})\right) - \Loss(\Thetab^t) \nonumber &\leq - \frac{\eta}{2}\left( 1 - 2L \eta \right) \| \nabla \Loss(\Theta^t) \|^2
+  \frac{\eta}{2}(1 + 2 L \eta) \left(\frac{C^2 M P(L_0^2 + L_{\hat{h}^2} \sum_{m=1}^M H_m^2)}{4\beta^2 b}\right) 
+ \frac{L \eta^2 \sigma^2}{B}. \label{expec5.eq}
\end{align}

Applying the assumption that $\eta < \frac{1}{2L}$ and rearranging (\ref{expec5.eq}), we obtain
\begin{align}
\| \nabla \Loss(\Thetab^t) \|^2 &\leq \frac{2(  \Loss(\Thetab^t) - \mathbb{E}_{t} (\Loss (\Thetab^{t+1}))} {\eta} 
+ (1 + 2 L \eta) \left(\frac{C^2 M P(L_0^2 + L_{\hat{h}}^2 \sum_{m=1}^M H_m^2)}{4\beta^2 b}\right) + 2 L \eta \frac{\sigma^2}{B}.
\end{align}

Averaging over $T$ iterations and taking total expectation, we have 
\begin{align}
\frac{1}{T} \sum_{t=0}^{T-1} \| \nabla \Loss(\Thetab^t) \|^2  \leq \frac{2 (  \Loss(\Thetab^0) - \mathbb{E}_{t} (\Loss (\Thetab^{T}))}{\eta T}
+ (1 + 2 L \eta) \left(\frac{C^2 M P(L_0^2 + L_{\hat{h}}^2 \sum_{m=1}^M H_m^2)}{4\beta^2 b}\right)
+ 2 L \eta \frac{ \sigma^2}{B}.
\end{align}
\end{proof}

\section{Experimental Details}

We describe the each dataset and its experimental setup below.

\textbf{Activity} is a time-series positional data on humans performing six activities, including walking, walking upstairs, walking downstairs, sitting, standing, and laying. The dataset is released under the CC0: Public Domain license, and is available for download on \url{https://www.kaggle.com/datasets/uciml/human-activity-recognition-with-smartphones/data}. The dataset contains $10,300$ samples, including $7,353$ training samples and $2,948$ testing samples. 
We run experiments with $5$ and $10$ parties, where each party holds $112$ or $56$ features.
We use batch size $B = 100$ and embedding size $P = 16$, and train for $100$ epochs with learning rate $0.01$.
We consider different sets of PBM parameters ${b\in \{4,8,16,64,128\}}$ and $\beta \in \{0.1, 0.15, 0.2, 0.25\}$.
The experimental results are computed using the average of $3$ runs.

\textbf{Cifar-10} is an image dataset with $10$ object classes for classification task. The dataset is released under the MIT License. Details about Cifar-10 can be found at \url{https://www.cs.toronto.edu/~kriz/cifar.html}. The dataset can be downloaded via \verb|torchvision.datasets.CIFAR10|. The dataset consists of $60,000$ $32$x$32$ colour images in $10$ classes, with $6,000$ images per class. The dataset is divided into training set of $50,000$ images and test set of $10,000$ images.
We experiment with $4$ parties, each holding a different quadrant of the images.
We use batch size $B = 100$ and embedding size $P = 16$, and train for $600$ epochs with learning rate $0.01$.
We consider different sets of PBM parameters ${b\in \{2,4,8,16\}}$ and $\beta \in \{0.05, 0.1, 0.15\}$.
The experimental results are computed using the average of $3$ runs.

\textbf{ImageNet} \cite{5206848} is a large-scale image dataset used for classification. The dataset is released under the CC-BY-NC 4.0 license. Details about ImageNet can be found at \url{https://www.image-net.org/}. We use a random $100$-class subset from the 2012 ILSVRC version of the data, including $100,000$ training images and $26,000$ testing images.
We run experiments with $4$ parties, where each party holds a different quadrant of the images.
We use batch size $B = 256$ and embedding size $P = 128$, and train for $700$ epochs with learning rate $0.03$.
We consider different sets of PBM parameters $b \in \{2, 4, 8\}$ and $\beta \in \{0.01, 0.05, 0.1\}$.
We show the results of a single run due to the large size of the dataset.


\textbf{ModelNet-10} \cite{wu20153d} is a large collection of 3D CAD models of different objects taken in $12$ different views.
The dataset is released under the MIT License.
Details about the ModelNet-10 dataset are available at \url{https://modelnet.cs.princeton.edu/}, and we used this \href{https://drive.google.com/file/d/0B4v2jR3WsindMUE3N2xiLVpyLW8/view}{Google Drive link} to download the dataset.
We train our model with a subset of the dataset with $1,008$ training samples and $918$ test samples on the set of $10$ classes: bathtub, bed, chair, desk, dresser, monitor, night stand, sofa, table, toilet.
We run experiments with $6$ and $12$ parties, where each party holds $2$ or $1$ view(s) of each CAD model.
We use batch size $B = 64$ and embedding size $P = 4096$, and train for $250$ epochs with learning rate $0.01$.
We consider different sets of PBM parameters $b \in \{2, 4, 8, 16\}$ and $\beta \in \{0.05, 0.1, 0.15\}$.
The experimental results are computed using the average of $3$ runs.

\textbf{Phishing} \cite{misc_phishing_websites_327} is a tabular dataset of $11,055$ samples with $30$ features for classifying if a website is a phishing website. The features include information about the use of HTTP, TinyURL, forwarding, etc. The dataset is released under the CC-BY-NC 4.0 license, and is available for download at \url{https://www.openml.org/search?type=data&sort=runs&id=4534&status=active}.
We split the dataset into training set and testing set with ratio $0.8$.
We experiment with $5$ and $10$ parties, where each party holds $6$ or $3$ features.
We use batch size $B = 100$ and embedding size $P = 16$, and train for $100$ epochs with learning rate $0.01$. We consider different sets of PBM parameters $b \in \{8, 16, 32, 64\}$ and $\beta \in \{0.1, 0.15, 0.2, 0.25\}$.
The experimental results are computed using the average of $3$ runs.

We implemented our experiment using a computer cluster of $40$ nodes. Each node is a CentOS 7 with $2$x$20$-core $2.5$ GHz Intel Xeon Gold $6248$ CPUs, 8× NVIDIA Tesla V100 GPUs with $32$ GB HBM, and $768$ GB of RAM. We provide our complete code and running instructions as part of the supplementary material.

The model neural network architecture for each of the five dataset are described as follow. Each party model of ModelNet-10 is a neural network with two convolutional layers and a fully-connected layer. Phishing and Activity each has a 3-layer dense neural network as the party model. Cifar-10 uses a ResNet18 neural network for each party model, and ImageNet uses a ResNet18 neural network for each party model. To bound the embedding values into the range $[-C, C]$ as required for Algorithm~\ref{pbm.alg}, we use the $tanh$ activation function to scale the embedding values with $C=1$ for each party model of all datasets.
All five datasets have the same server model that consists of a fully-connected layer for classification with cross-entropy loss.

\section{Additional Experimental Results}

In this section, we include additional results from the experiments introduced in Section \ref{exp.sec}, as well as experiments with the additional datasets.

\subsection{Accuracy and Privacy}

\begin{figure*}[h!]
\centering
\begin{subfigure}{0.33\textwidth}
\centering
  \includegraphics[width=\textwidth]{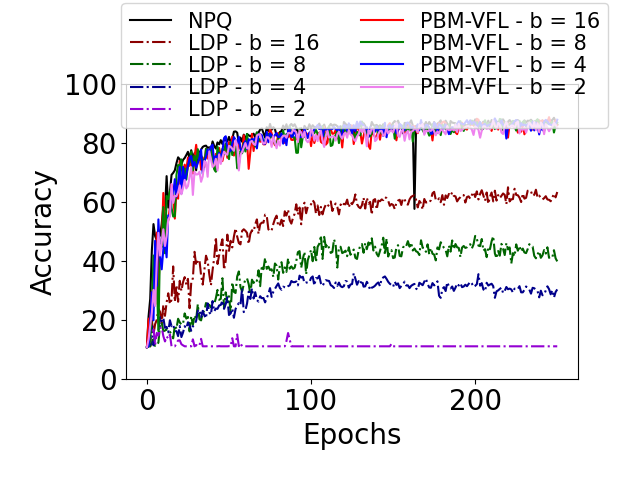}
  \caption{$\beta = 0.05$}
  \label{fig3a}
\end{subfigure}
\begin{subfigure}{0.33\textwidth}
  \centering
  \includegraphics[width=\textwidth]{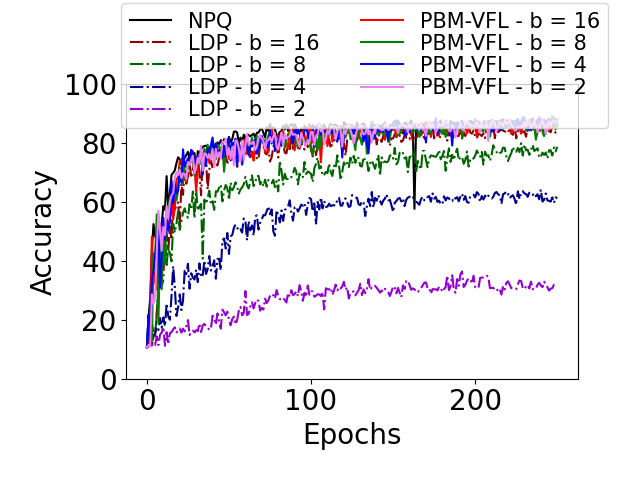}
  \caption{$\beta = 0.1$}
  \label{fig3b}
\end{subfigure}
\begin{subfigure}{0.33\textwidth}
\centering
  \includegraphics[width=\textwidth]{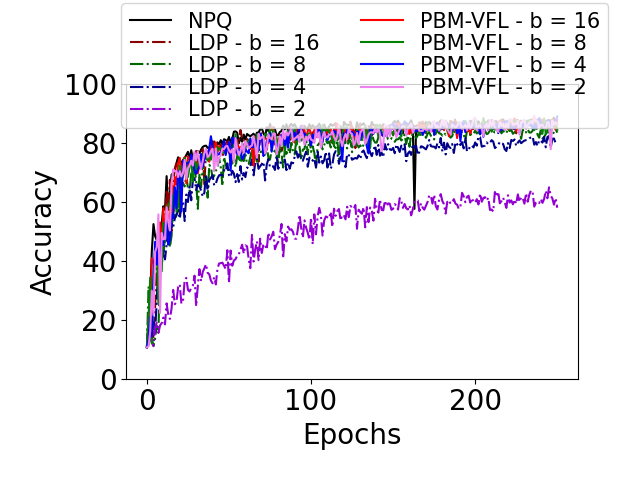}
  \caption{$\beta = 0.15$}
  \label{fig3c}
\end{subfigure}
\caption{Test accuracy by epoch on ModelNet-10 with $6$ parties. We compare PBM-VFL with No Privacy and Quantization (NPQ) and Local DP (LQP) using Gaussian noise with variance $\sigma_G^2 = \frac{2 M}{b \beta^2}$.}
\label{fig3}
\end{figure*}

Figure \ref{fig3} plots the test accuracy of ModelNet-10 dataset with $6$ parties. In Figures~\ref{fig3a}, \ref{fig3b}, and \ref{fig3c}, we see that the model performance of PBM-VFL is nearly the same as the baseline algorithm implementation without any privacy and quantization. Moreover, PBM-VFL outperforms the Local DP baseline in all cases. While PBM-VFL and Local DP provide the same level of privacy, given the same values of $b$ and $\beta$,  PBM-VFL achieves better accuracy.

\begin{figure*}[h!]
\centering
\begin{subfigure}{0.33\textwidth}
\centering
  \includegraphics[width=\textwidth]{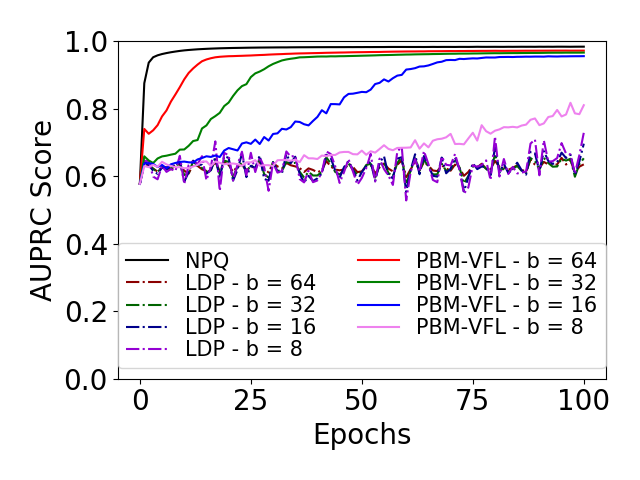}
  \caption{$\beta = 0.1$}
  \label{fig4a}
\end{subfigure}
\begin{subfigure}{0.33\textwidth}
  \centering
  \includegraphics[width=\textwidth]{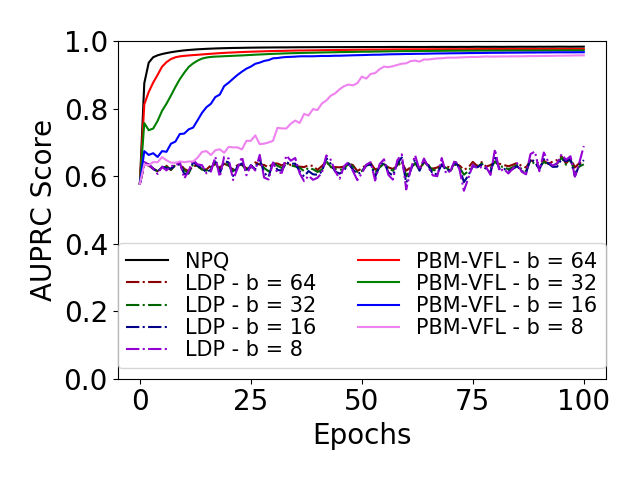}
  \caption{$\beta = 0.15$}
  \label{fig4b}
\end{subfigure}
\begin{subfigure}{0.33\textwidth}
\centering
  \includegraphics[width=\textwidth]{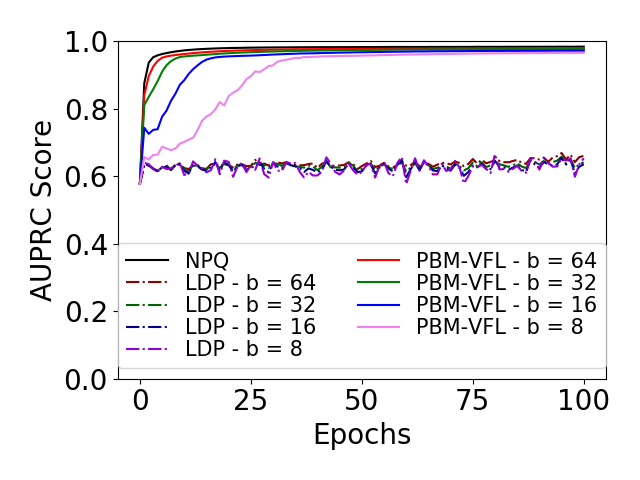}
  \caption{$\beta = 0.2$}
  \label{fig4c}
\end{subfigure}
\caption{AUPRC score by epoch on Phishing with $10$ parties. We compare PBM-VFL with No Privacy and Quantization (NPQ) and Local DP (LQP) using Gaussian noise with variance $\sigma_G^2 = \frac{2 M}{b \beta^2}$.}
\label{fig4}
\end{figure*}

Figure \ref{fig4} shows the AUPRC score of Phishing dataset with $10$ parties. For every fixed value of $\beta$, the model performance improves as $b$ increases. We also get accuracy improvement by fixing $b$ and increase $\beta$ as shown across Figures~\ref{fig4a}, ~\ref{fig4b}, and~\ref{fig4c}. This trend matches our theoretical results, as well as experimental results in Section~\ref{exp.sec}. In addition, our algorithm PBM-VFL outperforms the baselines with Local DP.

\begin{figure*}[h!]
\centering
\begin{subfigure}{0.3\textwidth}
\centering
  \includegraphics[width=\textwidth]{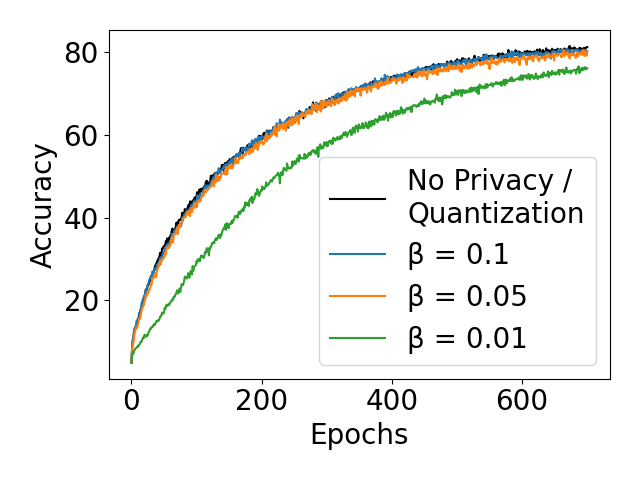}
  \caption{$b = 2$}
  \label{imageneta}
\end{subfigure}
\begin{subfigure}{0.3\textwidth}
  \centering
  \includegraphics[width=\textwidth]{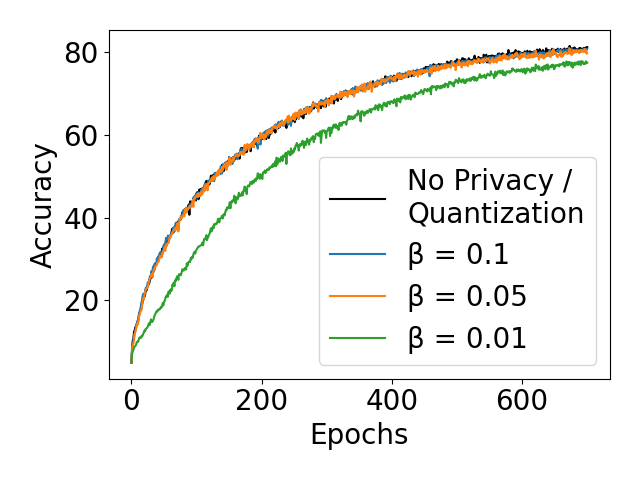}
  \caption{$b = 4$}
  \label{imagenetb}
\end{subfigure}
\begin{subfigure}{0.3\textwidth}
\centering
  \includegraphics[width=\textwidth]{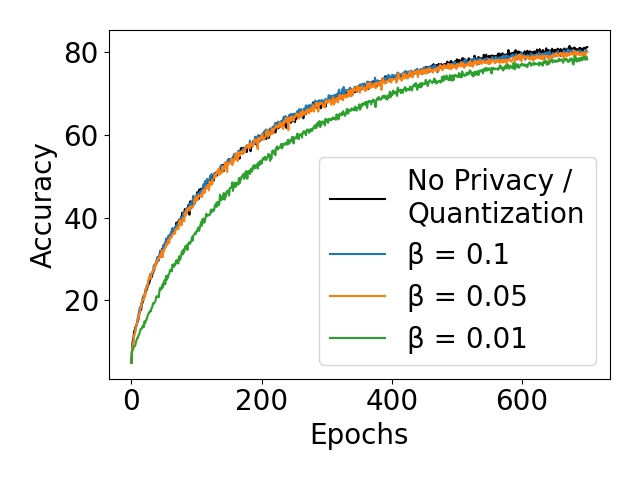}
  \caption{$b = 8$}
  \label{imagenetc}
\end{subfigure}
\caption{Test accuracy by epoch on ImageNet with $4$ parties. Test accuracy with $\beta = 0.05$ and $\beta = 0.1$ is nearly the same with the base case without Secure Aggregation and DP. Test accuracy with fixed $\beta = 0.01$ (green lines) increases as $b$ grows.}
\label{imagenet}
\end{figure*}

Figure \ref{imagenet} shows the test accuracy of PBM-VFL on the large-scale ImageNet dataset, and we observe similar trend as described above. With higher values of $\beta = 0.05$ and $\beta = 0.1$, there is nearly no loss in the test accuracy compared to the base case without any privacy and quantization. Even with very small value of $\beta = 0.01$ which provides high level of privacy, the test accuracy is still very high.

\subsection{Accuracy and Communication Cost}

\begin{table}[h!]
\begin{center}
\captionof{table}{Activity 5 clients - Train Accuracy Target $80\%$} \label{table4}
\begin{tabular}{ | c | c | c | c | }
\hline
$b$ & $\beta$ & Number of & Communication \\
& & Epochs & Cost (GB) \\
\hline\hline
& $0.1$ & $\infty$ & $\infty$ \\
$8$ & $0.15$ & $\infty$ & $\infty$ \\
& $0.2$ & $92$ & $1.94$ \\
& $0.25$ & $66$ & $1.39$ \\
\hline
& $0.1$ & $\infty$ & $\infty$ \\
$16$ & $0.15$ & $92$ & $1.98$ \\
& $0.2$ & $54$ & $1.16$ \\
& $0.25$ & $31$ & $0.67$ \\
\hline
& $0.1$ & $56$ & $1.25$ \\
$64$ & $0.15$ & $23$ & $0.51$ \\
& $0.2$ & $12$ & $0.27$ \\
& $0.25$ & $10$ & $0.22$ \\
\hline
& $0.1$ & $24$ & $0.55$ \\
$128$ & $0.15$ & $12$ & $0.27$ \\
& $0.2$ & $10$ & $0.22$ \\
& $0.25$ & $10$ & $0.22$ \\
\hline
\multicolumn{2}{|c|}{NPQ} & $10$ & $0.38$ \\
\hline
\end{tabular}
\end{center}
\end{table}

\begin{table}[h!]
\begin{center}
\captionof{table}{Phishing 5 clients - Train AUPRC Score Target $0.9$} \label{table6}
\begin{tabular}{ | c | c | c | c | }
\hline
$b$ & $\beta$ & Number of & Communication \\
& & Epochs & Cost (GB) \\
\hline\hline
& $0.1$ & $\infty$ & $\infty$ \\
$8$ & $0.15$ & $86$ & $2.18$ \\
& $0.2$ & $41$ & $1.04$ \\
& $0.25$ & $23$ & $0.58$ \\
\hline
& $0.1$ & $98$ & $2.54$ \\
$16$ & $0.15$ & $34$ & $0.88$ \\
& $0.2$ & $15$ & $0.39$ \\
& $0.25$ & $8$ & $0.21$ \\
\hline
& $0.1$ & $35$ & $0.92$ \\
$32$ & $0.15$ & $12$ & $0.32$ \\
& $0.2$ & $5$ & $0.13$ \\
& $0.25$ & $3$ & $0.08$ \\
\hline
& $0.1$ & $15$ & $0.40$ \\
$64$ & $0.15$ & $4$ & $0.11$ \\
& $0.2$ & $3$ & $0.08$ \\
& $0.25$ & $2$ & $0.05$ \\
\hline
\multicolumn{2}{|c|}{NPQ} & $2$ & $0.09$ \\
\hline
\end{tabular}
\end{center}
\end{table}

Tables \ref{table4} and \ref{table6} show the total communication cost needed to reach $80\%$ training accuracy for Activity and $0.9$ AUPRC training score for phishing. We observe that the communication cost decreases as we increase $b$. For the same value of $b$, higher $\beta$ also reduce the total communication cost. This behavior matches the trend described in Section \ref{exp.sec}.

\newpage

\subsection{Accuracy and Number of Parties}

\begin{figure*}[ht]
\centering
\begin{subfigure}{0.45\textwidth}
\centering
  \includegraphics[width=\textwidth]{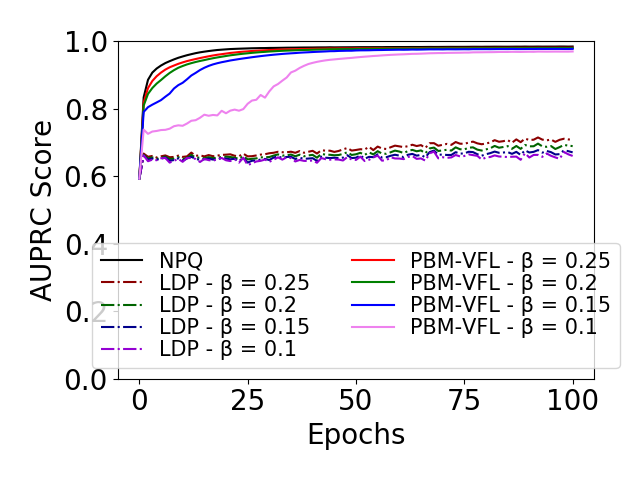}
  \caption{$5$ parties and $b = 32$}
  \label{phishinga}
\end{subfigure}
\begin{subfigure}{0.45\textwidth}
  \includegraphics[width=\textwidth]{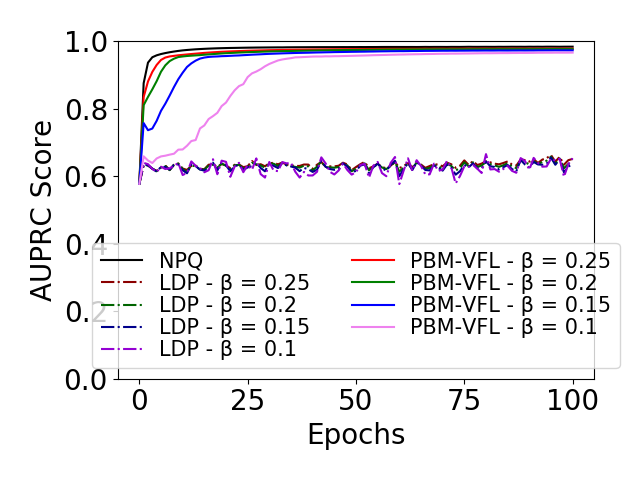}
  \caption{$10$ parties and $b = 32$}
  \label{phishingb}
\end{subfigure}
\caption{AUPRC score by epoch on Phishing dataset. We compare PBM-VFL with No Privacy and Quantization (NPQ) and Local DP (LQP) using Gaussian noise with variance $\sigma_G^2 = \frac{2 M}{b \beta^2}$.}
\label{phishing}
\end{figure*}

In Figures \ref{phishing} and \ref{modelnet}, we observe the same increasing trend in test accuracy for a smaller number of parties as described in Section~\ref{exp.sec}. For a fixed value of $b$, or fixed communication cost, a larger number of parties results in higher convergence error, leading to lower test accuracy across all $\beta$ values.

\begin{figure*}[ht]
\centering
\begin{subfigure}{0.45\textwidth}
\centering
  \includegraphics[width=\textwidth]{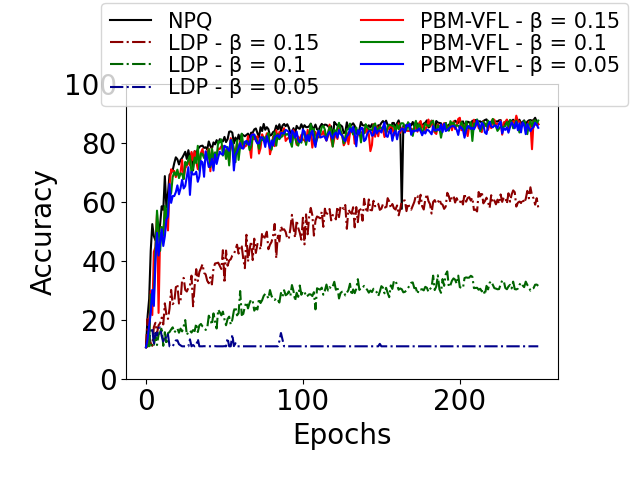}
  \caption{$6$ parties and $b = 2$}
  \label{modelneta}
\end{subfigure}
\begin{subfigure}{0.45\textwidth}
  \includegraphics[width=\textwidth]{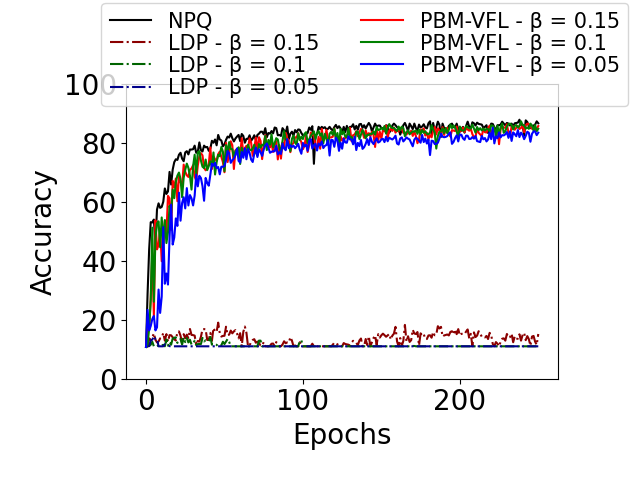}
  \caption{$12$ parties and $b = 2$}
  \label{modelnetb}
\end{subfigure}
\caption{Test accuracy by epoch on ModelNet-10 dataset. We compare PBM-VFL with No Privacy and Quantization (NPQ) and Local DP (LQP). LDP is local Gaussian noise with variance $\sigma_G^2 = \frac{2 M}{b \beta^2}$.}
\label{modelnet}
\end{figure*}

\end{document}